\PassOptionsToPackage{dvipsnames,table}{xcolor}
\documentclass[letterpaper,10pt]{vistapreprint}
\usepackage{amsmath,amssymb,amsthm,array,tabularx}
\usepackage{enumitem,needspace,float,xurl,titletoc,fontawesome5}
\setcitestyle{authoryear,round,citesep={;},aysep={,},yysep={;}}

\definecolor{vistaMailDai}{HTML}{9B303F}
\definecolor{vistaMailSong}{HTML}{2959A7}
\newcommand{\VISTAEmailIcon}[1]{\textcolor{#1}{\faIcon[regular]{envelope}}}
\definecolor{vistaNavy}{HTML}{0C3791}
\definecolor{vistaIce}{HTML}{F5FDFF}
\definecolor{vistaPink}{HTML}{FF9AC7}
\definecolor{vistaBlue}{HTML}{3BB2FB}
\definecolor{vistaYellow}{HTML}{FECD14}
\definecolor{vistaMint}{HTML}{5FC8BB}
\definecolor{vistaCoral}{HTML}{FC4748}
\definecolor{vistaPurple}{HTML}{A55DCD}
\definecolor{vistaStageOne}{HTML}{FEB2EA}
\definecolor{vistaStageTwo}{HTML}{FF9AC7}
\definecolor{vistaStageThree}{HTML}{F96144}
\definecolor{vistaStageFour}{HTML}{CC2927}
\colorlet{vistaReference}{vistaPink} \colorlet{vistaCongruence}{vistaYellow} \colorlet{vistaExpectation}{vistaBlue} \colorlet{vistaAgency}{vistaPurple} \colorlet{vistaCoping}{vistaMint} \colorlet{vistaRegulation}{vistaCoral} \colorlet{vistaIntervention}{vistaCoral}
\colorlet{vistaEvidence}{vistaBlue}
\colorlet{vistaAppraisal}{vistaPink}
\colorlet{vistaArbitration}{vistaPurple}
\colorlet{vistaPrediction}{vistaYellow}
\colorlet{vistaTeacher}{vistaMint}
\colorlet{vistaink}{black}
\colorlet{vistatext}{black}
\colorlet{vistablue}{vistaNavy}
\colorlet{vistasky}{vistaBlue!12!white}
\colorlet{vistalinen}{vistaPink!16!white}
\colorlet{vistared}{black}
\newif\ifVISTATableBody
\AtBeginEnvironment{table}{\VISTATableBodytrue\hypersetup{citecolor=black,linkcolor=black,urlcolor=black}}
\AtBeginEnvironment{table*}{\VISTATableBodytrue\hypersetup{citecolor=black,linkcolor=black,urlcolor=black}}
\AtBeginEnvironment{tabular}{\ifVISTATableBody\color{vistatext}\fi}
\AtBeginEnvironment{tabularx}{\ifVISTATableBody\color{vistatext}\fi}
\newcommand{\VISTAHeaderRow}{\rowcolor{vistasky}}
\newcommand{\VISTAFocusRow}{\rowcolor{vistalinen}}
\newcommand{\VISTAHeader}[1]{\textcolor{vistatext}{\textbf{#1}}}
\newcommand{\VISTAFocus}[1]{\textcolor{vistared}{#1}}
\makeatletter
\newcommand{\VISTAEdgeCell}[3]{\gdef\CT@cell@color{\CT@color{#1}\@tempdimb#2\relax\@tempdimc#3\relax
  \global\let\CT@cell@color\relax}}
\makeatother
\newcommand{\VISTAFirstHeader}[1]{\VISTAEdgeCell{vistasky}{0pt}{\tabcolsep}\VISTAHeader{#1}}
\newcommand{\VISTALastHeader}[1]{\VISTAEdgeCell{vistasky}{\tabcolsep}{0pt}\VISTAHeader{#1}}
\newcommand{\VISTAFirstFocus}[1]{\VISTAEdgeCell{vistalinen}{0pt}{\tabcolsep}\VISTAFocus{#1}}
\newcommand{\VISTALastFocus}{\VISTAEdgeCell{vistalinen}{\tabcolsep}{0pt}}
\arrayrulecolor{vistablue}
\hypersetup{colorlinks=true,citecolor=metablue,linkcolor=metablue,urlcolor=metablue}
\newcommand{\method}{\textsc{VISTA}}

\newcommand{\conf}{\mathrm{conf}}
\newcommand{\cons}{\mathrm{cons}}
\newcommand{\acc}{\mathrm{Acc}}

\newcolumntype{Y}{>{\raggedright\arraybackslash}X}
\colorlet{vistateal}{vistablue}
\colorlet{vistalight}{vistasky}
\newtheorem{proposition}{Proposition}
\title{VISTA: Value-Informed Event Appraisal\\for Multimodal Emotion Conflict}
\author[1,*,\VISTAEmailIcon{vistaMailDai}]{Jiale Dai}
\author[2,*]{Liuxian Ma}
\author[1]{Xiaoke Niu}
\author[3]{Wenjing Zhang}
\authorrowbreak
\author[3]{Huiying Zhao}
\author[3]{Zhaoxiang Liu}
\author[3]{Shiguo Lian}
\author[1,\ensuremath{\dagger},\VISTAEmailIcon{vistaMailSong}]{Guojie Song}
\affiliation[1]{State Key Laboratory of General Artificial Intelligence, School of Intelligence Science and Technology, Peking University}
\affiliation[2]{College of Artificial Intelligence, Tsinghua University}
\affiliation[3]{China United Network Communications Group Co., Ltd.}
\contribution[*]{Strictly equal contribution; either author order is equally valid.}
\contribution[\ensuremath{\dagger}]{Corresponding author.}
\metadata[Keywords]{Multimodal emotion recognition, cross-modal conflict, value-informed appraisal}
\metadata[Contact]{\VISTAEmailIcon{vistaMailDai}\,\href{mailto:daijiale26@stu.pku.edu.cn}{\textcolor{black}{\texttt{daijiale26@stu.pku.edu.cn}}}\quad\VISTAEmailIcon{vistaMailSong}\,\href{mailto:gjsong@pku.edu.cn}{\textcolor{black}{\texttt{gjsong@pku.edu.cn}}}}
\hypersetup{pdftitle={VISTA: Value-Informed Event Appraisal for Multimodal Emotion Conflict},pdfauthor={Jiale Dai, Liuxian Ma, Xiaoke Niu, Wenjing Zhang, Huiying Zhao, Zhaoxiang Liu, Shiguo Lian, Guojie Song},pdfsubject={Value-informed appraisal for multimodal emotion recognition},pdfkeywords={multimodal emotion recognition, cross-modal conflict, value-informed appraisal, modality arbitration}}
\renewcommand{\arraystretch}{1.08}
\newcommand{\VISTARegisterContentsTitle}[2]{\expandafter\def\csname VISTA@toc@#1\endcsname{#2}}
\newcommand{\VISTAContentsTitle}[1]{\ifcsname VISTA@toc@#1\endcsname
    \csname VISTA@toc@#1\endcsname
  \else#1\fi}
\VISTARegisterContentsTitle{Reading the seven appraisal fields}{The seven appraisal fields}
\VISTARegisterContentsTitle{The unit of appraisal and its evidence}{Appraisal unit and evidence}
\VISTARegisterContentsTitle{Evidence anchors and inferential distinctions}{Evidence anchors}
\VISTARegisterContentsTitle{Underspecified scenes and unresolved appraisals}{Underspecified scenes}
\VISTARegisterContentsTitle{Appraisal interfaces and their decision roles}{Related appraisal interfaces}
\VISTARegisterContentsTitle{Paired scenes for concern-relative interpretation}{Paired scene interpretations}
\VISTARegisterContentsTitle{The same outcome can support different concerns}{Outcome and concern}
\VISTARegisterContentsTitle{The same setback can be expressed under different social demands}{Public and private expression}
\VISTARegisterContentsTitle{Agency and norms distinguish disappointment from blame}{Agency, norms, and blame}
\VISTARegisterContentsTitle{Cue diagnosticity and representational interpretation}{Cue meaning and representation}
\VISTARegisterContentsTitle{Emotion expectation and cue diagnosticity}{Emotion priors and cue meaning}
\VISTARegisterContentsTitle{Uncertain and imperfect appraisal}{Appraisal uncertainty}
\VISTARegisterContentsTitle{What a predicted representation can contribute}{Representation refinement}
\VISTARegisterContentsTitle{A guide to appraisal and scene interventions}{Appraisal and scene interventions}
\VISTARegisterContentsTitle{Correspondence, connection, and content}{Correspondence, access, content}
\VISTARegisterContentsTitle{Scene edits and representation replacement}{Scene and representation edits}
\VISTARegisterContentsTitle{Relevant response and label change are separate judgments}{Response versus label change}
\VISTARegisterContentsTitle{How appraisal changes the weighted readout}{Appraisal and weighted readout}
\VISTARegisterContentsTitle{Semantic organization and evaluative references}{Semantic organization}
\VISTARegisterContentsTitle{Grounding the interface in observed context}{Grounding in observed context}
\VISTARegisterContentsTitle{Dataset roles and scoring conventions}{Datasets and scoring conventions}
\VISTARegisterContentsTitle{Public benchmark metrics and the normalized interface}{Benchmark metrics and scoring}
\VISTARegisterContentsTitle{Complementary evaluation settings}{Complementary evaluation settings}
\VISTARegisterContentsTitle{Conflict specificity and directional improvement}{Conflict specificity}
\VISTARegisterContentsTitle{Improvement in both conflict directions}{Both conflict directions}
\VISTARegisterContentsTitle{Complete recognition results and mechanism controls}{Recognition and mechanism tests}
\VISTARegisterContentsTitle{CA-MER: all subsets and output validity}{CA-MER and output validity}
\VISTARegisterContentsTitle{EmoMM: conflict, missingness, and their intersection}{EmoMM: conflict and missingness}
\VISTARegisterContentsTitle{CH-SIMS v2.0: increasing conflict intensity}{CH-SIMS v2: conflict intensity}
\VISTARegisterContentsTitle{The complete appraisal and training ablations}{Appraisal and training ablations}
\VISTARegisterContentsTitle{Frozen-backbone prompts and explanation quality}{Frozen prompts and explanations}
\VISTARegisterContentsTitle{Counterfactual response and irrelevant rewrites}{Counterfactual response}
\VISTARegisterContentsTitle{Two paired tests with different success criteria}{Paired tests and success criteria}
\VISTARegisterContentsTitle{Intervention dimensions and diagnostic weighting}{Diagnostic modality weights}
\VISTARegisterContentsTitle{Appraisal representation and downstream prediction}{Appraisal readout and use}
\VISTARegisterContentsTitle{A shared linear probe on frozen representations}{Frozen-backbone appraisal probes}
\VISTARegisterContentsTitle{Using appraisal to predict ten emotion intensities}{Appraisal in emotion prediction}
\VISTARegisterContentsTitle{Supervision construction, field quality, and label visibility}{Appraisal supervision and quality}
\VISTARegisterContentsTitle{From candidate pool to retained training set}{Data selection and retention}
\VISTARegisterContentsTitle{Seven-field audit on 400 candidates}{Seven-field quality audit}
\VISTARegisterContentsTitle{An isolated label-visibility diagnostic}{Label-visibility diagnostic}
\VISTARegisterContentsTitle{Computation and the structure of remaining errors}{Resources and remaining errors}
\VISTARegisterContentsTitle{Training and inference resources}{Training and inference costs}
\VISTARegisterContentsTitle{Errors corrected, introduced, and shared}{Corrected, introduced, shared errors}
\VISTARegisterContentsTitle{Ordinary conversational recognition and class recall}{MELD and class recall}
\VISTARegisterContentsTitle{Shared training and task adaptation}{Training and task adaptation}
\VISTARegisterContentsTitle{What is held fixed across the core comparison}{Common training controls}
\VISTARegisterContentsTitle{Checkpoint use across evaluation tasks}{Checkpoint use and adaptation}
\VISTARegisterContentsTitle{The complete cross-task result overview}{Cross-task result overview}
\VISTARegisterContentsTitle{Reporting conventions and reproducibility map}{Reporting and evidence map}
\VISTARegisterContentsTitle{Quantities, denominators, and precision}{Units, denominators, and precision}
\VISTARegisterContentsTitle{A complete map of the final experimental evidence}{Complete evidence map}

\VISTARegisterContentsTitle{Field types and missing supervision}{Field types and missing labels}
\VISTARegisterContentsTitle{Teacher generation and field masks}{Teacher generation and masks}
\VISTARegisterContentsTitle{Training objective and missing supervision}{Training loss and valid labels}

\newcommand{\VISTAAppendixHeadings}{\let\VISTAOriginalSection\section
  \let\VISTAOriginalSubsection\subsection
  \RenewDocumentCommand{\section}{s o m}{\IfBooleanTF{##1}{\VISTAOriginalSection*{##3}}{\IfNoValueTF{##2}{\VISTAOriginalSection[\VISTAContentsTitle{##3}]{##3}}{\VISTAOriginalSection[##2]{##3}}}}\RenewDocumentCommand{\subsection}{s o m}{\IfBooleanTF{##1}{\VISTAOriginalSubsection*{##3}}{\IfNoValueTF{##2}{\VISTAOriginalSubsection[\VISTAContentsTitle{##3}]{##3}}{\VISTAOriginalSubsection[##2]{##3}}}}}

\colorlet{vistamuted}{black}
\colorlet{vistarule}{vistaBlue!30!white}
\titlecontents{section}[1.6em]
  {\addvspace{6pt}\fontsize{10}{12}\selectfont\sffamily\bfseries\color{black}}
  {\contentslabel{1.6em}}
  {}{\hspace{.5em}\hfill\contentspage}
\titlecontents{subsection}[2.9em]
  {\addvspace{.6pt}\fontsize{9.6}{11.5}\selectfont\color{black}}
  {\contentslabel{2.5em}}
  {}{\hspace{.5em}\titlerule*[4pt]{.}\contentspage}
\newcommand{\VISTAAppendixFront}{\phantomsection\label{app:contents}\pdfbookmark[0]{Appendix contents}{vista.appendix.contents}\startcontents[vistaappendix]
  \startcontents[vistaconcepts]
  \startcontents[vistaevidence]\stopcontents[vistaevidence]
  \begingroup\hypersetup{linkcolor=black}\setlength{\parskip}{0pt}
  \noindent{\color{vistablue}\rule{\linewidth}{1.2pt}}\par
  \vspace{8pt}
  \noindent{\huge\sffamily Appendix}\hfill
  \includegraphics[height=13pt]{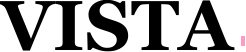}\par
  \vspace{5pt}
  {\small Value-Informed Event Appraisal for Multimodal Emotion Conflict\par}
  \vspace{7pt}
  \noindent{\color{vistarule}\rule{\linewidth}{.4pt}}\par
  \vspace{10pt}
  \noindent\begin{minipage}[t]{.48\linewidth}
    {\small\sffamily\bfseries Concepts and evaluation}\par
    \vspace{3pt}
    {\color{vistarule}\rule{\linewidth}{.4pt}}\par
    \printcontents[vistaconcepts]{}{1}{\setcounter{tocdepth}{2}}
  \end{minipage}\hfill
  \begin{minipage}[t]{.48\linewidth}
    {\small\sffamily\bfseries Results and experimental record}\par
    \vspace{3pt}
    {\color{vistarule}\rule{\linewidth}{.4pt}}\par
    \printcontents[vistaevidence]{}{1}{\setcounter{tocdepth}{2}}
  \end{minipage}\par
  \endgroup
  \clearpage
}
\abstract{
Conflicting emotional cues can be individually valid: a subdued voice may reflect a blocked goal while a smile satisfies a social obligation. Their interpretation depends on what the event means to the person. We introduce \method{} (Value-Informed Semantic Trust Arbitration), a learned seven-field appraisal interface that conditions modality arbitration on concerns, event relations, and expression conditions while retaining a joint-evidence residual. A log-odds decomposition separates emotion expectation from cue diagnosticity, motivating an interface that lets appraisal change how evidence is interpreted. With a shared Qwen2.5-Omni-7B backbone and matched training examples and steps, VISTA reaches 64.5\% conflict accuracy on CA-MER, improving on modality gating by 2.5 percentage points on conflict and 0.2 on consistency. Shuffling appraisal across scenes or removing its decision connection reduces this benefit. A common frozen-backbone probe reaches 0.600 macro CCC for appraisal readout, compared with 0.505 for emotion-only fine-tuning. Evaluations across five benchmarks connect recognition under increasing conflict with appraisal readout and downstream decision use. Together, the analyses and experiments support scene-specific appraisal as an intermediate representation that helps interpret conflicting emotional evidence.
}
\newlength{\VISTADefaultTextFloatSep}
\begin{document}
\maketitle
\section{Introduction}
\label{sec:intro}
An unsuccessful candidate smiles and says ``Congratulations'' to the winner, yet speaks in a subdued voice. The words and smile may perform public courtesy, while the voice reflects a blocked goal (Figure~\ref{fig:scene}). The recognition problem is to interpret conflicting emotional evidence about the same person.

\paragraph{A measurable source of recognition difficulty.}
Cross-modal affective disagreement affects a substantial portion of annotated data. Our audit of the public CH-SIMS labels finds different sentiment polarities across text, audio, and vision in \textbf{1,117 of 2,281 clips (49.0\%)} under a fixed three-class mapping \citep{yu2020chsims,zhang2021crossmodal}. In social-media image--text data, \citet{pan2024hybrid} report 42.5\% inconsistency on filtered MVSA-Single. These observations motivate an explicit boundary between broad affective disagreement and strong conflict.

\paragraph{Defining conflict.}
We identify pairwise conflict from modality-specific affect annotations for the same target and event. For at least two annotated modalities $\mathcal M$, annotations $q^m$, discrepancy $d$, and a fixed threshold $\tau$, define
\begin{equation}
 \kappa_\tau(X)=\mathbf 1\!\left\{\max_{m<n,\;m,n\in\mathcal M}d(q^m,q^n)>\tau\right\}.
 \label{eq:conflictdefinition}
\end{equation}
For categorical affect, $d(u,v)=\mathbf 1\{u\not\equiv v\}$ and $\tau=0$ encode disagreement, including neutral versus non-neutral polarity. Strong scalar conflict instead uses $d(u,v)=|u-v|$ and $\tau=1$ on $[-1,1]$ \citep{wang2025diffemo}. Our primary CA-MER evaluation retains its fixed partitions: exactly one unimodal label agrees with the multimodal reference; the other disagrees \citep{han2025camer}. EmoMM compares modality polarity to a joint reference \citep{sun2026emomm}. Appendix~\ref{app:conflictdefinition} gives the label audit, benchmark boundaries, and their relation to the reported conflict-strength groups.

\Needspace{5\baselineskip}
Multimodal recognition must determine both which observations to trust and what those observations imply. Cross-modal attention and shared/private representations organize complementary evidence, while balanced optimization addresses unequal use of modalities \citep{tsai2019mult,hazarika2020misa,peng2022balanced}. Conflict-aware evaluation further asks which stream agrees with the affective reference \citep{han2025camer,sun2026emomm}. Yet a clearly observed cue can support different emotions in different circumstances: a smile accompanying rejection may express courtesy, relief, or satisfaction. Assessing signal reliability alone leaves this interpretive question open. The challenge is to connect a cue to the person's relation to the event, so that its diagnostic role can change with the scene. The expressions serve different roles: congratulations address the colleague, while vocal tone can reflect the candidate's own outcome. Their disagreement can therefore inform how private concerns are managed publicly.

Appraisal theory provides that connection through goals, expectations, agency, coping, and norms \citep{lazarus1991emotion,scherer2001appraisal}. Values describe broader priorities, such as achievement or maintaining relationships \citep{schwartz2012overview}; a concern specifies what the person seeks to attain or protect in this event. Wanting the role makes rejection a setback, while public courtesy can explain the accompanying smile. The relevant reference is local to the event; a person's broader priorities need not be inferred as a fixed profile. These relations suggest an explicit interface between multimodal evidence and affect prediction: represent the event relative to the person's concerns, then use that representation when interpreting its emotional cues.

We introduce \method{}, \emph{Value-Informed Semantic Trust Arbitration}. Its seven-field appraisal organizes concerns, event relations, and expression conditions to condition both modality arbitration and affect prediction (Figure~\ref{fig:framework}). This interface makes the scene's evaluative meaning available where conflicting evidence is interpreted.

\begin{figure}[t]
\centering
\includegraphics[width=\linewidth]{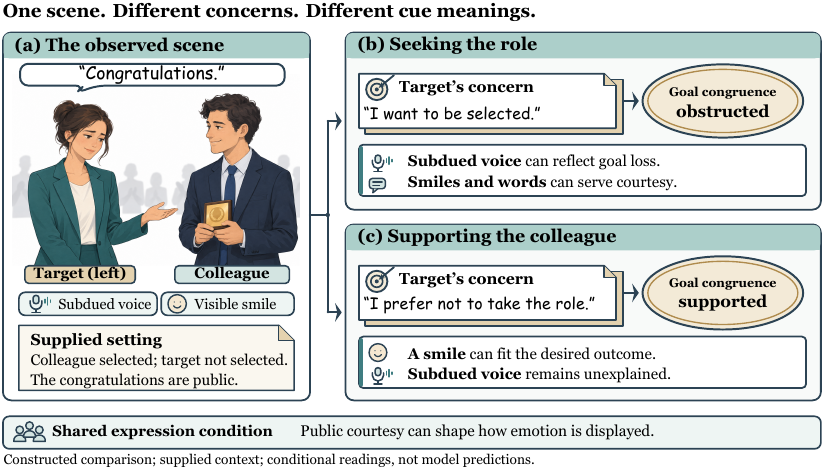}
\caption{\textbf{The same cues can have different meanings under different concerns.} This constructed comparison holds the observed event and cues fixed while changing the supplied goal of the candidate on the left. Goal congruence changes with that goal; the shared public setting supplies a condition on expression. The interpretations are conditional readings, not model predictions.}
\label{fig:scene}
\end{figure}

The contributions form a single argument. \textbf{Design rationale:} separating emotion expectation from cue diagnosticity motivates appraisal-dependent interpretation; representation refinement specifies when an explicit interface can help. \textbf{Learned interface:} VISTA connects event appraisal to arbitration and prediction. \textbf{Discriminating evidence:} matched controls locate the gain in conflict, while shuffling, generic bottlenecks, removed decision access, and paired interventions examine its source. Five benchmarks distinguish conflict resolution, appraisal readout, and recognition scope; the appendix retains complete supervision and cost results.
\section{Background and Related Work}
\label{sec:related}
\paragraph{Multimodal learning under disagreement.}
 DiffEmo operationalizes strong sentiment conflict through unimodal annotation gaps \citep{wang2025diffemo}; CA-MER and EmoMM extend evaluation to categorical conflict and polarity disagreement, with MoSEAR and CHASE as associated methods \citep{han2025camer,sun2026emomm}. CHASE uses conflict to steer attention, whereas VISTA supplies an explicit concern-relative interpretation to both arbitration and prediction. Our controlled gate comparison tests this decision role on a common backbone.

\Needspace{7\baselineskip}
\paragraph{Appraisal and value representations.}
Appraisal theory relates emotion to goals, expectations, agency, coping, and norms \citep{lazarus1991emotion,scherer2001appraisal}; regulation shapes the observable response \citep{gross1998regulation}. Early work proposed inferring appraisal from nonverbal signals \citep{mortillaro2015appraisals}. ValueNet uses value representations for emotion classification, and ValueEval studies values in arguments \citep{qiu2022valuenet,kiesel2023valueeval}. CAREBench evaluates appraisal reasoning and interventions \citep{sun2026carebench}; ECFlow uses appraisal for emotion--cause extraction \citep{liang2026ecflow}; THERADIA supplies human audiovisual appraisal annotations \citep{fournier2024theradia}. Expectation-based appraisal predicts emotion labels and shifts \citep{wang2026appraisal}; AG-CTR$^2$ instead organizes appraisal chains for support retrieval \citep{chu2026appraisal}. These tasks emphasize what appraisal predicts or retrieves. Table~\ref{tab:positioning} in Appendix~\ref{app:semantic} contrasts these roles with recognition-time cue interpretation. VISTA uses it to condition the interpretation of conflicting sensory evidence: concerns and expression conditions inform arbitration and prediction, while correspondence and connection tests examine the interface's contribution.

\paragraph{Intermediate concepts and explanations.}
Concept bottlenecks support concept supervision and intervention \citep{koh2020concept}; post-hoc variants enable concept-level model edits \citep{yuksekgonul2023posthoc}. VISTA retains a direct evidence pathway, making appraisal an auxiliary decision interface. Explanation faithfulness is a separate question \citep{turpin2023language}: dependence on internal appraisal, the content of a displayed rationale, and the semantic role of a computational route require distinct evidence.
\section{Value-Informed Semantic Trust Arbitration}
\label{sec:method}
\subsection{Scene meaning and the diagnosticity of emotional cues}
VISTA uses event meaning to interpret and combine emotional evidence. We derive how appraisal changes cue diagnosticity, implement this interaction through modality arbitration, and analyze its representational role. Let $X=(x^t,x^a,x^v,C)$ denote text, audio, video, and recognition-time context; $y$ is the affective target. A cue is \emph{diagnostic} when it distinguishes affective hypotheses within a scene.

For hypotheses $y_1,y_0$, cue $u$, and appraisal $z$, define posterior log odds $L(u,z)=\log[P(y_1\mid u,z)/P(y_0\mid u,z)]$. Bayes' rule gives, for positive probabilities,
\begin{equation}
 L(u,z)=\underbrace{\log\frac{P(y_1\mid z)}{P(y_0\mid z)}}_{b(z):\ \text{emotion expectation}}
 +\underbrace{\log\frac{P(u\mid y_1,z)}{P(u\mid y_0,z)}}_{\ell(u,z):\ \text{cue diagnosticity}},
 \label{eq:semanticlogodds}
\end{equation}
A blocked goal can change $b$; a courtesy obligation can change a smile's contribution $\ell$. At fixed $b$, the latter alone can cross the binary zero--one boundary $b+\ell=0$. For four supported cue--appraisal conditions, contrast two cues across two appraisals:
\begin{equation}
 \begin{split}
 \mathcal I&=[L(u_1,z_1)-L(u_0,z_1)]-[L(u_1,z_0)-L(u_0,z_0)]\\
 &=[\ell(u_1,z_1)-\ell(u_0,z_1)]-[\ell(u_1,z_0)-\ell(u_0,z_0)].
 \end{split}
 \label{eq:cueinteractionmain}
\end{equation}
Both $b$ terms cancel: nonzero $\mathcal I$ rules out an additive representation $L(u,z)=s(u)+t(z)$ and motivates cue--appraisal interactions. Appendix~\ref{app:proof} derives this restriction and a smile/courtesy example. Here $z$ is conceptual; predicted $\hat z=A(X)$ can itself depend on the cue, so $b(\hat z)$ need not exclude cue information. This decomposition supplies a design criterion: the decision pathway should allow appraisal to change the interpretation of a cue, beyond shifting an emotion prior.

\Needspace{6\baselineskip}
The event-level appraisal state is
\begin{equation}
 z=(g,c,e,a,k,n,r),
 \label{eq:schema}
\end{equation}
where the fields denote goal/concern, goal congruence, expectation, agency, coping/control, norm/social relevance, and expression regulation. Concern identifies what matters; congruence relates the outcome to it. Expectation, agency, and control distinguish anticipation, cause, and available remedies; norms and regulation distinguish social standards from emotional display.

Here \emph{value-informed} refers to the evaluative reference supplied by $g$ and $n$: $c$ records goal support, while $r$ describes expression conditions. Each field stores its value, confidence, evidence, and validity mask; Appendix~\ref{app:semantic} specifies the field types and missing-value rules. Section~\ref{sec:mechanism} tests removing $r$.

\begin{figure}[t]
\centering
\includegraphics[width=\linewidth]{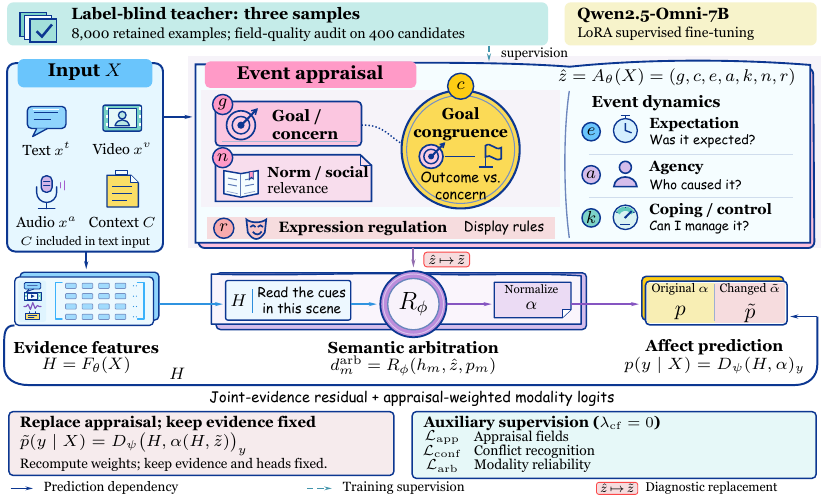}
\caption{\textbf{An explicit appraisal interface for multimodal recognition.} Appraisal sets modality weights $\alpha$; prediction combines weighted unimodal logits with joint evidence. Replacing appraisal at fixed evidence produces the readout $\tilde p$. The lower panels show this diagnostic and the supervision roles.}
\label{fig:framework}
\end{figure}

\Needspace{5\baselineskip}
\subsection{The appraisal-conditioned decision pathway}
\label{sec:pathway}
Let $H=F_{\theta}(X)=\{h_T,h_A,h_V,h_{AVT}\}$ contain separate unimodal forward passes and a joint pass. Each representation is read at the input-end position of the Thinker's final layer, after final normalization and before appraisal generation. For classification, the predicted state $\hat z=A_{\theta}(X)$ conditions the modality scores:
\begin{equation}
 \begin{gathered}
 u_m=W_mh_m,\qquad p_m=\operatorname{softmax}(u_m),\\
 d_m^{\mathrm{arb}}=R_{\phi}(h_m,\hat z,p_m),\qquad
 \alpha_m(H,\hat z)=\frac{b_m e^{d_m^{\mathrm{arb}}}}{\sum_j b_j e^{d_j^{\mathrm{arb}}}}.
 \end{gathered}
 \label{eq:arbitrationweights}
\end{equation}
where $m\in\{T,A,V\}$, $b_m\in\{0,1\}$ indicates availability, and at least one modality is available. The classifier combines logits before normalization:
\begin{equation}
 p_{\Theta}(y\mid X)=D_{\psi}(H,\alpha(H,\hat z))_y
 =\operatorname{softmax}\!\left(W_fh_{AVT}+\sum_m\alpha_m(H,\hat z)u_m\right)_y.
 \label{eq:pathway}
\end{equation}
Appraisal changes prediction through the weights, while the residual retains joint evidence. This coupling implements the cue--appraisal interaction motivated by Equation~\ref{eq:cueinteractionmain}: at fixed $H$ and availability, an appraisal change gives $\Delta\log(p_y/p_{y'})=\sum_m\Delta\alpha_m(u_{m,y}-u_{m,y'})$, with $\sum_m\Delta\alpha_m=0$. Equal branch margins therefore cancel, while greater disagreement gives scene-matched reweighting more scope to change a class preference; Appendix~\ref{app:pathidentification} proves the identity and bounds this change by the margin range. The internal $\alpha$ allocates weight among unimodal branches; the residual contributes separately. These weights differ from the predictive posterior and input-masking diagnostic $\alpha_{\mathrm{diag}}$.

\subsection{Learning and interventions}
\label{sec:learning}
The system uses Qwen2.5-Omni-7B with LoRA supervised fine-tuning \citep{qwen2025omni,hu2022lora}. Three label-blind samples from the same teacher provide seven-field pseudo-appraisals; filtering and adjudication retain 8,000 of 10,256 candidates, including partially masked field targets. A separate 400-candidate audit measures field coverage, sampling consistency, and grounding (Appendix~\ref{app:annotation}). Inference uses predicted appraisal; human appraisal is reserved for an oracle comparison.

In the isolated label-visible condition, copying is 21.0\% versus 3.0\% under label-blind supervision, while grounding is 67.0\% versus 73.0\%, despite higher field consistency. This distinction supports scene grounding as a criterion for appraisal supervision (Appendix~\ref{app:annotation}).

Training combines final-task and unimodal losses with appraisal, conflict, and arbitration supervision, using coefficients $(1,1,0.5,0.5,0.5)$ (Equation~\ref{eq:trainingobjective}). Field-type losses are normalized within fields and then over valid fields; supervision masks retain partial labels. The counterfactual coefficient is $\lambda_{\mathrm{cf}}=0$, so paired interventions evaluate learned responses to changed meaning. Appendix~\ref{app:implementation} specifies the training configuration and loss normalization.

At fixed evidence and availability, replacing $\hat z$ by $\tilde z$ gives
\begin{equation}
 p_{\Theta}(y\mid X;\tilde z)=D_{\psi}\!\left(H,\alpha(H,\tilde z)\right)_y.
 \label{eq:intervention}
\end{equation}
Shuffling tests scene correspondence; within-emotion shuffling restricts donors by emotion as an offline diagnostic. The connection ablation retains appraisal supervision and removes $z$ from arbitration, whereas rewriting an external explanation holds $H$, $z$, and $\alpha$ fixed. Section~\ref{sec:mechanism} reports these variants; Appendix~\ref{app:interventionguide} distinguishes fixed-checkpoint replacement from retraining.

\subsection{Decision-relevant representation refinement}
\label{sec:representation}
Appraisal can recover decision distinctions omitted by a compressed $H$, while remaining a function of the same input $X$.
\begin{proposition}[Decision-relevant representation refinement]
\label{prop:representationrisk}
For fixed deterministic $H=F(X)$ and $\hat Z=A(X)$ with finite labels, let $\mathcal R^*(U)=1-\mathbb E[\max_y P(Y=y\mid U)]$ denote Bayes zero--one risk. Then
\begin{equation}
 \mathcal R^*(H,\hat Z)\leq\mathcal R^*(H),\qquad
 \mathcal R^*(X,\hat Z)=\mathcal R^*(X).
 \label{eq:representationrisk}
\end{equation}
The first inequality is strict exactly when, for a positive-probability set of $h$, no label maximizes the refined posteriors almost surely under $\hat Z\mid H=h$.
\end{proposition}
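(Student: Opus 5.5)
The argument has three parts: a tower-property comparison for the inequality, a measurability argument for the equality, and a characterization of when the tower-property gap vanishes.

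\textbf{Inequality.} Write $\pi(h)=P(Y=\cdot\mid H=h)$ and $\pi(h,z)=P(Y=\cdot\mid H=h,\hat Z=z)$. By the tower property, $\pi(h)=\mathbb E[\pi(H,\hat Z)\mid H=h]$. For every $y$ this gives
\[
\pi_y(h)=\mathbb E[\pi_y(H,\hat Z)\mid H=h]\le \mathbb E[\max_{y'}\pi_{y'}(H,\hat Z)\mid H=h].
\]
Maximizing over $y$, which is legitimate because the label set is finite, yields
\[
\max_y\pi_y(h)\le \mathbb E[\max_{y'}\pi_{y'}(H,\hat Z)\mid H=h].
\]
Taking expectations over $H$ gives $\mathcal R^*(H,\hat Z)\le\mathcal R^*(H)$. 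This is simply convexity of the max function combined with conditional Jensen. I would use regular conditional distributions so that $\hat Z\mid H=h$ is well defined; this is standard when labels are finite and the spaces are standard Borel.

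\textbf{Equality.} Since $\hat Z=A(X)$ is deterministic, $\sigma(X,\hat Z)=\sigma(X)$. Hence $P(Y\mid X,\hat Z)=P(Y\mid X)$ almost surely, and the two risks coincide.

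\textbf{Strictness.} For fixed $h$, the gap
\[
\mathbb E[\max_{y'}\pi_{y'}(h,\hat Z)\mid H=h]-\max_y\pi_y(h)
\]
is nonnegative. It vanishes iff there exists $y^*$ with $\pi_{y^*}(h)=\mathbb E[\max_{y'}\pi_{y'}\mid H=h]$. The argument for this equivalence is as follows.
\begin{enumerate}
\item If some $y^*$ attains $\max_{y'}\pi_{y'}(h,z)$ for almost every $z$ under $\hat Z\mid H=h$, then $\pi_{y^*}(h)$ equals the conditional expectation of the max, so the gap is zero.
\item Conversely, suppose the gap is zero. Let $y^*$ maximize $\pi_y(h)$. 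Then $\mathbb E[\max_{y'}\pi_{y'}-\pi_{y^*}\mid H=h]=0$ with a nonnegative integrand, so $\pi_{y^*}(h,z)=\max_{y'}\pi_{y'}(h,z)$ almost surely.
\end{enumerate}
So the per-$h$ gap is positive exactly when no single label maximizes the refined posteriors almost surely under $\hat Z\mid H=h$. Because the gap is nonnegative, its expectation is strictly positive iff it is positive on a set of $h$ of positive probability. This is exactly the stated condition.

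\textbf{Main obstacle.} I expect the measure-theoretic bookkeeping to be the only delicate step. Two things need care: choosing versions of the conditional posteriors so that "almost surely under $\hat Z\mid H=h$" is meaningful, and checking that the set of $h$ satisfying the condition is measurable. Finiteness of the label set handles the latter, since the condition becomes a finite union over $y^*$ of measurable events.
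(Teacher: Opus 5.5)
Your proposal is correct and follows essentially the same route as the paper: the tower property with convexity of the max for the inequality, and the nonnegative integrand $\max_y v_y-v_{y_H}$ vanishing almost surely for the strictness characterization. Both also obtain the equality from $\sigma(X,A(X))=\sigma(X)$. Your explicit handling of regular conditional distributions and of the measurability of the exceptional set of $h$ tightens what the paper covers by treating posteriors as defined up to probability-zero states.
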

\emph{Proof sketch.} Let $v_y=P(Y=y\mid H,\hat Z)$ and $y_H$ be a Bayes label given $H$. The tower property gives $P(Y=y\mid H)=\mathbb E[v_y\mid H]$, hence
\begin{equation}
 \Delta_{\mathrm{repr}}:=\mathcal R^*(H)-\mathcal R^*(H,\hat Z)
 =\mathbb E\!\big[\max_y v_y-v_{y_H}\big]\geq0.
 \label{eq:refinementgapmain}
\end{equation}
Equality holds exactly when $y_H$ remains optimal almost surely; $(X,A(X))$ and $X$ carry the same information. Positive Bayes-risk reduction requires a decision distinction, not merely changed confidence. If $\hat Z$ is already determined by $H$, this gain is zero. Appendix~\ref{app:theoryextra} gives the full proof, including ties. For an analytical binary example, let a constant $H$ merge equally likely appraisal states with class-1 posteriors $0.8$ and $0.2$. Preserving this distinction lowers Bayes error from $0.5$ to $0.2$ (Appendix~\ref{app:theoryextra}).

For fixed $H$ and $\hat Z$, let $f_U$ be a learned classifier with risk $\mathcal R(f_U)=\mathcal R^*(U)+\epsilon_U$, where $\epsilon_U\geq0$. Substitution for $U=H$ and $(H,\hat Z)$ yields
\begin{equation}
 \mathcal R(f_H)-\mathcal R(f_{H,\hat Z})
 =\Delta_{\mathrm{repr}}+\epsilon_H-\epsilon_{H,\hat Z}.
 \label{eq:riskaccounting}
\end{equation}
Improvement occurs exactly when $\epsilon_{H,\hat Z}-\epsilon_H<\Delta_{\mathrm{repr}}$: appraisal can help through decision distinctions or lower readout error. Section~\ref{sec:mechanism} tests intermediate organization and scene correspondence through generic and shuffled appraisal. The decomposition separates representational opportunity from the readout that realizes it; Equation~\ref{eq:pathway} accesses appraisal through its constrained weighting function. Matched recognition and connection contrasts assess the resulting decision behavior. Appendix~\ref{app:theoryextra} derives the excess-risk posterior form; Appendix~\ref{app:uncertain} bounds appraisal-error propagation.
\section{Experiments}
\label{sec:experiments}
\setcounter{topnumber}{2}

\begin{table}[t]
\centering
\caption{\textbf{CA-MER accuracy (\%).} Conflict averages the video-aligned and audio-aligned subsets. The four trained models share their common-stage examples and optimization schedule. Complete output-validity results and protocol-reassessed external methods are in Appendix~\ref{app:additional}.}
\label{tab:camer}
\small
\setlength{\tabcolsep}{4.4pt}
\begin{tabularx}{\linewidth}{@{}l*{5}{>{\centering\arraybackslash}X}@{}}
\toprule
\VISTAHeaderRow
\VISTAFirstHeader{Method} & \VISTAHeader{Video-aligned} & \VISTAHeader{Audio-aligned} & \VISTAHeader{Conflict} & \VISTAHeader{Consistent} & \VISTALastHeader{Overall}\\
\midrule
Qwen2.5-Omni Base &49.0&60.0&54.5&68.0&59.0\\
Emotion-SFT &55.0&65.0&60.0&73.0&64.3\\
Generic-CoT-SFT &57.0&66.0&61.5&74.0&65.7\\
Modality-Gate-SFT &58.0&66.0&62.0&74.0&66.0\\
\VISTAFocusRow
\VISTAFirstFocus{\method{}} &\textbf{61.0}&\textbf{68.0}&\textbf{64.5}&\textbf{74.2}&\VISTALastFocus \textbf{67.7}\\
\bottomrule
\end{tabularx}
\end{table}
The design in Section~\ref{sec:method} makes a specific prediction: appraisal should help most when a cue admits competing emotional interpretations. We first test where the recognition gain occurs, then use correspondence and connection controls to examine what the interface contributes. Interventions and complementary tasks test the resulting account beyond the primary comparison.
\subsection{Evaluation design}
\paragraph{Tasks and evaluation modes.}
CA-MER provides video-aligned, audio-aligned, and consistent subsets \citep{han2025camer}. We normalize outputs and average the first two accuracies for conflict. On EmoMM, we evaluate common-stage checkpoints without adaptation \citep{sun2026emomm}. We adapt to CH-SIMS v2 and MELD for sentiment accuracy across train-defined conflict groups and seven-class emotion recognition, respectively \citep{liu2022chsims,poria2019meld}. On THERADIA, we separately evaluate frozen-backbone appraisal probes and downstream emotion-intensity regression \citep{fournier2024theradia}. Appendix~\ref{app:data} details the protocols and shared corpus sources.

\paragraph{Matched common training.}
The four trained models share Qwen2.5-Omni-7B, 8,000 examples, and the optimization schedule in Section~\ref{sec:learning}; Base is frozen. Generic-CoT-SFT and VISTA both average 360 target tokens. The gate and VISTA have 12.19M and 12.39M trainable parameters. Scores are aggregates; contrasts use unrounded values. Appendices~\ref{app:repro} and~\ref{app:cost} give the configuration and compute record.

\paragraph{Conflict specificity.}
For a comparator $B$, define
\begin{equation}
 S(B)=\underbrace{\acc_{\conf}(\method)-\acc_{\conf}(B)}_{\Delta_{\conf}(B)}
      -\underbrace{\acc_{\cons}(\method)-\acc_{\cons}(B)}_{\Delta_{\cons}(B)}.
 \label{eq:specificity}
\end{equation}
Positive $S$ means that improvement is larger on the observed conflict subsets than on consistent inputs. This descriptive contrast complements absolute accuracy; all group scores and differences appear in Appendix~\ref{app:specificityreading}.

\subsection{The improvement concentrates on conflicting evidence}
\label{sec:mainresults}
VISTA reaches \textbf{64.5\%} conflict and \textbf{67.7\%} overall accuracy (Table~\ref{tab:camer}). Against Modality-Gate-SFT, the 2.5-point conflict gain accompanies a 0.2-point consistency gain ($S=2.3$), locating the benefit where the streams disagree.

Against generic reasoning, visual- and audio-aligned gains of 4.0 and 2.0 points contrast with 0.2 on consistency, giving $S=2.8$; against Emotion-SFT, $S=3.3$. Figure~\ref{fig:specificity} separates these effects. Under the common normalized interface, MoSEAR and CHASE reach 61.5\% and 61.0\% conflict accuracy, respectively, placing VISTA 3.0 and 3.5 points higher. These protocol-reassessed scores are distinct from published results.

Emotion-SFT tests label training, Generic-CoT-SFT supplies reasoning at the same mean target length, and Modality-Gate-SFT tests learned weighting. The latter two nearly match VISTA on consistency but separate on conflict; gains in both alignment directions extend to cases favoring either stream. The next tests examine which properties of appraisal accompany this advantage.

\begin{figure}[!t]
\centering
\includegraphics[width=\linewidth]{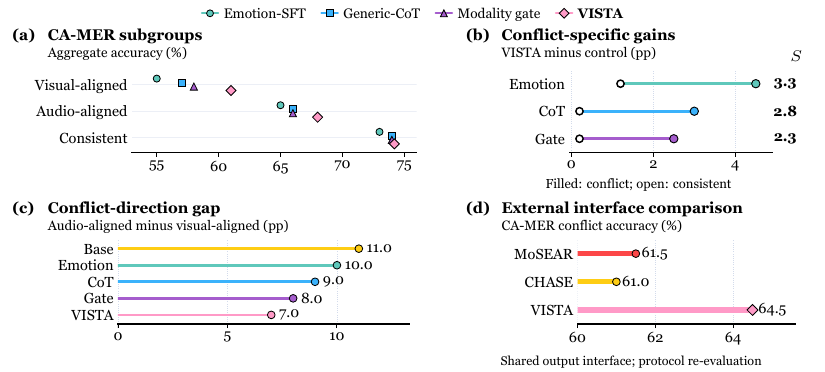}
\caption{\textbf{Conflict resolution beyond stronger reasoning and gating.} (a) Subgroup accuracy. (b) Conflict and consistent gains; $S$ is their difference. (c) The audio--video accuracy gap. (d) External methods re-evaluated under the common output interface. Points show aggregate scores; gains and gaps use percentage points.}
\label{fig:specificity}
\end{figure}

\begin{table}[!t]
\centering\small\setlength{\tabcolsep}{4.5pt}
\setlength{\abovecaptionskip}{0pt}
\caption{\textbf{Directed response and selective stability.} Direction is agreement with the annotated target-probability direction; class change counts label flips. Both use 150 relevant pairs; stability uses 150 irrelevant rewrites. Rates are percentages; $\Delta p$ and $\Delta$ log-odds retain native scales.}
\label{tab:pairedmain}
\begin{tabularx}{\linewidth}{@{}l*{5}{>{\raggedleft\arraybackslash}X}@{}}
\toprule
\VISTAHeaderRow
\VISTAFirstHeader{Method}&\VISTAHeader{Direction}&\VISTAHeader{Class change}&\VISTAHeader{$\Delta p$}&\VISTAHeader{$\Delta$ log-odds}&\VISTALastHeader{Stability}\\
\midrule
Emotion-SFT&62.0&24.0&0.050&0.201&91.0\\
Generic-CoT-SFT&66.0&28.0&0.070&0.281&90.0\\
Modality-Gate-SFT&63.0&25.0&0.055&0.221&91.0\\
\VISTAFocusRow
\VISTAFirstFocus{\method{}}&\textbf{75.0}&36.0&0.120&0.484&\VISTALastFocus \textbf{92.0}\\
\bottomrule
\end{tabularx}
\end{table}

\subsection{Scene correspondence and decision access strengthen recognition}
\label{sec:mechanism}
Higher accuracy alone does not establish why appraisal helps. Figure~\ref{fig:mechanism} therefore separates decision access, semantic content, and scene correspondence. Retaining appraisal supervision but removing its forward decision connection gives 62.5\% conflict accuracy, versus 64.5\% for full VISTA. A generic semantic bottleneck reaches 63.2\%, while renaming the appraisal fields retains 64.2\%. These controls associate the additional benefit with the organized appraisal input used by prediction.

Scene correspondence provides a further distinction. Cross-sample shuffling gives 62.0\%, whereas restricting donors to the same emotion class gives 63.7\%. The latter preserves the donor's emotion category but loses 0.8 points, supporting information tied to the particular event. Removing expression regulation gives 63.6\%. Removing appraisal, conflict, or reliability supervision gives 63.2\%, 63.5\%, or 63.0\%, respectively; Table~\ref{tab:fullablations} retains every variant.

The interface also differs from a displayed explanation. Editing the external rationale with internal state fixed leaves accuracy at 64.5\%; corrupting internal appraisal while preserving the explanation gives 62.0\%. Native visual/audio weights change from 0.42/0.30 on video-aligned inputs to 0.25/0.47 on audio-aligned inputs. These conditional means complement the replacement tests; together, the contrasts connect the representation analysis in Section~\ref{sec:representation} to a scene-matched input that participates in prediction.

\subsection{Relevant changes elicit directed, selective responses}
\label{sec:counterfactual}
The preceding tests remove or disrupt information. A complementary test asks whether changing relevant meaning produces a directed response while irrelevant reformulations preserve the decision. Table~\ref{tab:pairedmain} reports 150 relevant intervention pairs and 150 separate irrelevant-rewrite pairs. Direction agreement concerns the target probability; label change concerns the predicted class.

The reported results combine 75.0\% direction agreement with 92.0\% rewrite stability. The 36.0\% label-change rate distinguishes directed probability movements from those large enough to cross a class boundary.

Input masking gives a $+0.040$ target-modality diagnostic shift (controls: $+0.015$--$+0.020$), distinct from native $\alpha$. These are behavioral evaluations of the learned interface: the final configuration uses $\lambda_{\mathrm{cf}}=0$. Appendix~\ref{app:cfreconciliation} gives the complete paired record.

\begin{figure}[!t]
\centering
\includegraphics[width=\linewidth]{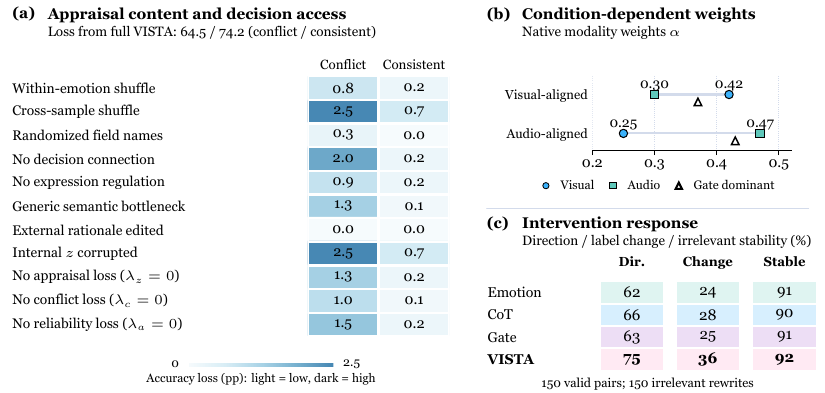}
\caption{\textbf{From scene-specific appraisal to selective responses.} (a) Accuracy losses under appraisal and training variants. (b) Native modality weights; triangles show the gate's dominant-modality weight. (c) Direction agreement, label changes, and irrelevant-rewrite stability on two separate 150-pair sets. Native weights are distinct from masking-derived diagnostics.}
\label{fig:mechanism}
\end{figure}

\begin{table}[!t]
\centering\small\setlength{\tabcolsep}{5pt}
\caption{\textbf{Recognition across conditions and tasks (\%).} EmoMM uses no benchmark-specific adaptation; CH-SIMS v2 and MELD use task-adapted models (Base is zero-shot). Acc2 is binary accuracy, Q4 the highest-conflict group, and wF1 weighted F1.}
\label{tab:recognitionmain}
\begin{tabularx}{\linewidth}{@{}l>{\raggedleft\arraybackslash}X>{\hsize=1.5\hsize\raggedleft\arraybackslash}X*{3}{>{\hsize=.833333\hsize\raggedleft\arraybackslash}X}@{}}
\toprule
&\multicolumn{2}{c}{EmoMM accuracy}&\multicolumn{2}{c}{CH-SIMS v2 Acc2}&MELD\\
\cmidrule(lr){2-3}\cmidrule(lr){4-5}
\VISTAHeaderRow
\VISTAFirstHeader{Method}&\VISTAHeader{Conflict}&\VISTAHeader{Conflict + missing}&\VISTAHeader{Overall}&\VISTAHeader{Q4}&\VISTALastHeader{wF1}\\
\midrule
Qwen2.5-Omni Base&46.5&38.0&79.4&70.0&59.55\\
Emotion-SFT&47.5&38.0&84.0&77.0&65.49\\
Generic-CoT-SFT&48.0&38.6&84.3&77.8&65.69\\
Modality-Gate-SFT&49.0&40.0&84.8&79.2&65.74\\
\VISTAFocusRow
\VISTAFirstFocus{\method{}}&\textbf{52.0}&\textbf{43.5}&\textbf{85.9}&\textbf{81.5}&\VISTALastFocus \textbf{66.94}\\
\bottomrule
\end{tabularx}
\end{table}

\subsection{The pattern extends across conflict strength and evaluation modes}
\label{sec:generalization}
Table~\ref{tab:recognitionmain} and Figure~\ref{fig:transfer} extend the evaluation to increasing conflict strength, missing evidence, and ordinary emotion recognition.

\paragraph{Conflicting and absent evidence.}
EmoMM conflict and conflict-plus-missingness accuracy reach 52.0\% and 43.5\%, gains of 4.0 and 4.9 points over Generic-CoT-SFT. The consistency-to-conflict drop is 4.3 points, versus 6.6--7.5 for trained controls. Appendix~\ref{app:additional} gives all conditions and the CHASE comparison, whose router used EmoMM supervision.

\Needspace{5\baselineskip}
\paragraph{Greater conflict, greater benefit.}
On CH-SIMS v2, the gain over Emotion-SFT grows from 0.2 points in Q1 to 4.5 in Q4. Q4 accuracy reaches 81.5\% and MAE falls from 0.365 to 0.315. This progression strengthens the conflict-specific account across training-defined intensity groups (Figure~\ref{fig:transfer}a).

\begin{figure}[!t]
\centering
\includegraphics[width=.94\linewidth]{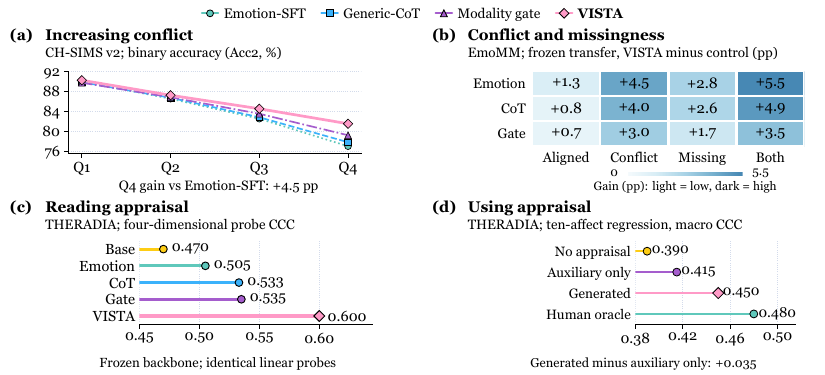}
\caption{\textbf{Conflict strength, frozen evaluation, and appraisal readout.} (a) Task-adapted CH-SIMS v2 binary accuracy by conflict group. (b) EmoMM gains without benchmark-specific adaptation. (c) THERADIA four-dimensional probe CCC on frozen backbones. (d) Ten-label emotion-intensity regression CCC with different appraisal access; human appraisal is an oracle.}
\label{fig:transfer}
\end{figure}

\paragraph{From readable appraisal to useful appraisal.}
Identical frozen-backbone probes yield THERADIA macro CCC 0.600 for VISTA versus 0.505 for Emotion-SFT. Downstream ten-emotion CCC is 0.390 without appraisal, 0.415 with disconnected auxiliary supervision, 0.450 with generated appraisal, and 0.480 with human appraisal. The 0.035 gain over auxiliary-only training complements the CA-MER connection contrast: readable appraisal also supports prediction (Appendix~\ref{app:learnabilityuse}).

\paragraph{Recognition scope and practical cost.}
MELD weighted-F1 improves by 1.46 points over Emotion-SFT, with higher recall in all seven classes (Table~\ref{tab:recognitionmain}). VISTA uses 72 GPU-hours per seed and 15.5\,s median latency, versus 40 hours/13.5\,s for equal-length Generic-CoT and 64 hours/6.0\,s for gating. Appendices~\ref{app:cost} and~\ref{app:ordinary} retain full resources and class results.

\paragraph{The role of learning.}
\label{sec:frozen}
Frozen appraisal prompting gives 56.0\% conflict accuracy versus 56.4\% for generic reasoning; training reverses this ordering (64.5\% versus 61.5\%), supporting the learned interface. Appendices~\ref{app:additional} and~\ref{app:annotation} retain the prompting and supervision audits.
\Needspace{8\baselineskip}
\section{Discussion and Conclusion}
\label{sec:conclusion}
VISTA connects what an event means to a person with how its emotional evidence is used. Distinguishing emotion expectation from cue diagnosticity motivates a concern-relative appraisal that reweights modality logits alongside joint evidence. Recognition gains concentrate on conflict; correspondence, connection, and intervention tests connect this benefit to the scene-specific interface. The analysis explains when reweighting can alter a class preference and when appraisal can refine a compressed representation. Together, these results establish a concrete route from scene meaning to conflict-sensitive emotion recognition.
\FloatBarrier
\setlength{\textfloatsep}{\VISTADefaultTextFloatSep}
\begingroup
\titlespacing*{\subsection}{0pt}{2ex plus .5ex}{1ex plus .2ex}
\subsection*{AI use statement}
We used AI assistants in two roles. First, for manuscript preparation, including grammar checking, text polishing, and assistance with conceptual framing, mathematical arguments, result interpretation, and figure preparation. Second, for routine coding assistance during implementation, including annotation processing and visualization. The illustrative scene in Figure~\ref{fig:scene} was AI-generated; model-generated appraisal supervision is described in Section~\ref{sec:learning}. The method, experimental design, and all implementation decisions affecting the reported results were finalized by the authors, who verified all AI-assisted output and take full responsibility for this paper.

\subsection*{Ethics statement}
Appraisal is a contextual hypothesis about an event, rather than a durable personal attribute or a judgment of moral worth. Goals, norms, and expression vary across people and settings. Consequential uses require context-specific validation and human review; conversational and audiovisual data require attention to participant privacy and dataset access conditions. Appendix~\ref{app:repro} discusses responsible interpretation.

\subsection*{Reproducibility statement}
Appendix~\ref{app:proof} gives the analytical assumptions and full derivations. Appendix~\ref{app:data} describes datasets and conflict protocols, Appendix~\ref{app:annotation} records supervision, and Appendix~\ref{app:pilot} specifies common training and task adaptation. Appendix~\ref{app:repro} maps the complete experimental evidence. The appendices report the complete experimental results, scoring conventions, and links between the analyses, tables, and figures. For the independent public-label prevalence audit, Appendix~\ref{app:conflictdefinition} specifies the label mapping, exact comparison rule, sample counts, and data provenance.
\par\endgroup
\clearpage
\setcounter{topnumber}{2}

\clearpage
\appendix
\renewcommand{\floatpagefraction}{0.85}
\VISTAAppendixHeadings
\VISTAAppendixFront

\section{Reading the seven appraisal fields}
\label{app:semantic}
The appraisal schema gives the model a reference for interpreting an event: what the person seeks to attain or protect, how the outcome bears on that concern, and how the situation shapes its expression. Its usefulness depends on preserving these relations. An outcome description such as ``the candidate was not selected'' records what happened. Goal congruence additionally asks whether that outcome obstructed something the candidate wanted. A smile records an expression; expression regulation concerns the conditions under which that expression was produced. This distinction between an observation and its event-relative interpretation organizes the seven fields.

\subsection{Field types and missing supervision}
\label{app:appraisalschema}
Each field stores four components: \texttt{value}, \texttt{confidence}, \texttt{evidence}, and \texttt{mask}. Table~\ref{tab:appraisaltypes} specifies the value space. Evidence anchors connect a field to the input; confidence records the certainty of that judgment, while the mask determines whether it supplies a valid supervision target. Confidence is recorded separately from supervision weights.

\begin{table}[htbp]
\caption{\textbf{Typed appraisal interface.} These value spaces distinguish scene semantics, expressive conditions, and missing information. Each norm bit has its own validity mask.}
\label{tab:appraisaltypes}
\small
\begin{tabularx}{\linewidth}{@{}>{\raggedright\arraybackslash}p{1.1cm}>{\raggedright\arraybackslash}p{2.6cm}Y@{}}
\toprule
\VISTAHeaderRow
\VISTAFirstHeader{Field} & \VISTAHeader{Type} & \VISTALastHeader{Values}\\
\midrule
$g$ & Short text & Goal or concern supported by the scene\\
$c$ & Categorical & \texttt{promotes}, \texttt{obstructs}, \texttt{irrelevant}\\
$e$ & Categorical & \texttt{expected}, \texttt{violated}, \texttt{uncertain}\\
$a$ & Categorical & \texttt{self}, \texttt{other}, \texttt{environment}, \texttt{shared}, \texttt{unknown}\\
$k$ & Continuous & $[0,1]$ coping/control value\\
$n$ & Four binary entries & \texttt{politeness}, \texttt{identity}, \texttt{status}, \texttt{relationship}\\
$r$ & Categorical & \texttt{none}, \texttt{suppressed}, \texttt{masked}, \texttt{exaggerated}, \texttt{social\_maintenance}\\
\bottomrule
\end{tabularx}
\end{table}

A missing field is stored as \texttt{null} and excluded by its mask; it is not replaced with numerical zero. The output state $a=\texttt{unknown}$ is retained when applicable but does not provide a valid agency target. Each bit of $n$ is set to \texttt{false} only when the input supports a negative judgment; absent evidence leaves that bit missing. Thus, missing coping information is distinct from $k=0$, and an unresolved norm is distinct from an explicitly absent one. Appraisal losses follow these types and normalize over valid supervision, as specified in Appendix~\ref{app:implementation}.

\subsection{The unit of appraisal and its evidence}
An appraisal is indexed by a recognition target, an event, and the time at which the response is interpreted. Write this unit as $(p,\omega,\tau)$, where $p$ is the person whose affect is being recognized and $\omega$ is the relevant outcome or interaction. This indexing supplies a semantic contract for Equation~\ref{eq:schema}; it is not an additional model input. In a selection scene, the candidate's unsuccessful application and the colleague's successful application concern the same announcement but different target--event relations. After an appeal becomes possible, the candidate's control can change even though the original outcome does not.

The fields answer different types of question. The concern $g$ identifies an object to attain or protect. Congruence $c$ evaluates the relation between that object and $\omega$. Expectation $e$ concerns what was anticipated before $\omega$; agency $a$ concerns its attributed cause. Control $k$ concerns available responses after or during the event. Norms $n$ identify standards relevant to the event or interaction, and regulation $r$ concerns how the resulting response is displayed. These semantic types explain why the seven entries cannot be replaced by seven interchangeable positive--negative scores.

For example, let $\mathcal O$ be the event outcomes described in the available context and let $\succeq_{p,g}$ denote the target's stated or context-supported preference ordering relative to concern $g$. Comparing the observed outcome $o\in\mathcal O$ with the pertinent alternative $o_0$ gives a conceptual reading of congruence:
\begin{equation}
 o\succ_{p,g}o_0:\ \text{supports }g,\qquad
 o\prec_{p,g}o_0:\ \text{obstructs }g.
 \label{eq:congruenceorder}
\end{equation}
The alternative $o_0$ is fixed by the comparison, such as receiving versus not receiving the position. Holding that comparison fixed while changing the concern isolates its evaluative role. An unestablished preference relation leaves the judgment unresolved; it does not imply indifference. This ordering is a semantic device rather than an implemented utility function. It makes the role of concern precise: changing $g$ can change the ordering while the event fact $o$ remains fixed.

\paragraph{A linked interpretation rather than seven independent observations.}
An evidence anchor and the inference made from it play different roles. ``I applied because I wanted to lead the project'' supports the concern; ``the position went to someone else'' establishes the outcome; their relation supports obstruction. A final-decision announcement also bears on immediate control, while the presence of the selected colleague bears on expressive obligations. These conclusions reuse observations, so agreement among fields is not independent corroboration. Their value is in making the dependencies explicit: congruence must refer to the stated concern, agency to the appraised event, and regulation to the observed display under the relevant interaction conditions. A coherent goal edit can therefore require a congruence edit, as developed in Appendix~\ref{app:interventionguide}.

\subsection{Evidence anchors and inferential distinctions}
Table~\ref{tab:semanticanchors} describes how to read each field from available context. An evidence anchor is material in the scene that bears on the question; it need not directly state the appraisal. For instance, a prior request to join a committee can support an inference about the relevant goal, while the committee's decision establishes the outcome. Relating those two observations supplies the congruence judgment.

\begin{table}[htbp]
\caption{A semantic reading guide for the seven fields. Evidence anchors identify relevant observations; the final column specifies the relation to interpret.}
\label{tab:semanticanchors}
\small
\begin{tabularx}{\linewidth}{@{}>{\raggedright\arraybackslash}p{2.25cm}YY@{}}
\toprule
\VISTAHeaderRow
\VISTAFirstHeader{Field} & \VISTAHeader{Evidence anchor} & \VISTALastHeader{Distinction to preserve}\\
\midrule
Goal / concern & Requests, commitments, prior choices, or stated priorities in the available context & The current object of concern, such as obtaining this role, is more specific than a general value such as achievement.\\
Goal congruence & The observed outcome together with evidence about the relevant goal & Whether the event supports or obstructs the goal; the same external outcome can have different signs for different concerns.\\
Expectation & Prior plans, predictions, promises, or established patterns & Expectedness concerns anticipation. An undesired event can be expected, and a desirable event can be surprising.\\
Agency & Actions, decisions, attributions, and identified participants & Who or what produced the outcome; identifying a responsible agent does not by itself establish intent or blame.\\
Coping / control & Available remedies, remaining choices, resources, or finality of the decision & The person's capacity to alter or manage the consequences, distinct from responsibility for causing them.\\
Norm / social relevance & Roles, relationships, agreed procedures, audience, and obligations & Which interpersonal standards bear on this situation, including whether an outcome or an expression is socially expected.\\
Expression regulation & Expression in relation to the event, audience, and other cues & Whether suppression, masking, or exaggeration helps explain the display; a visible smile alone does not establish masking.\\
\bottomrule
\end{tabularx}
\end{table}

\paragraph{Binding appraisal to the person and event.}
A useful concern is specific enough to relate to the outcome. ``Protecting an important relationship'' becomes informative when the relationship and the threatened interaction are identifiable. The same scene can contain several people with different concerns, so a successful outcome for one person need not be successful for the recognition target. It can also involve more than one concern for that person: losing a competition may obstruct achievement while an appropriate congratulation preserves a relationship. The schema organizes this coexistence without requiring a one-to-one mapping from a value category to an emotion.

\paragraph{Keeping related fields distinct.}
Agency and control occupy different positions in an account of the event. A committee may make a decision that the candidate can appeal, or make a decision that is final. Conversely, the candidate may have caused a problem that another person now controls. Norms and regulation are similarly related but distinct. A norm can judge the event itself, such as whether a selection procedure was fair, or prescribe an appropriate response. A public occasion can create an obligation to be courteous; whether a particular display serves that obligation is a further question about regulation. Separating these questions prevents a single contextual fact from being counted as several independent reasons for an emotion label.

\Needspace{9\baselineskip}
\subsection{Underspecified scenes and unresolved appraisals}
A short clip may reveal the outcome while omitting the concern, the expectation, or the available response. In this semantic guide, an unresolved field remains an open question. Lack of evidence for obstruction does not imply that a goal was supported, and lack of a visible attempt to intervene does not establish low control. Likewise, an unspecified expectation cannot be filled by assuming that every negative outcome was surprising. The missing-value and validity rules in Appendix~\ref{app:appraisalschema} preserve these distinctions in the training targets.

The other fields can remain informative when one question is unresolved. A clear announcement of a final decision can support low immediate control even when the candidate's expectations are unknown. A formal public exchange can establish an expressive obligation without establishing the private emotional response. Such partial interpretations preserve useful evidence while keeping the final affective judgment responsive to the full multimodal input.

\subsection{Appraisal interfaces and their decision roles}
\label{app:relatedinterfaces}
Table~\ref{tab:positioning} complements Section~\ref{sec:related} by distinguishing the information organized by each interface, the decision it informs, and the corresponding evaluation target.

\begin{table}[t]
\centering
\caption{\textbf{Related approaches by decision role.} Selected methods organized by the information used, the decision it informs, and the evaluation target.}
\label{tab:positioning}
\small
\setlength{\tabcolsep}{4pt}
\begin{tabularx}{\linewidth}{@{}>{\raggedright\arraybackslash}p{2.05cm}YYY@{}}
\toprule
\VISTAHeaderRow
\VISTAFirstHeader{Approach} & \VISTAHeader{Organizing signal} & \VISTAHeader{Decision role} & \VISTALastHeader{Evaluation focus}\\
\midrule
\citet{wang2026appraisal} & Expectations and their violations & Predict emotion labels and shifts & Emotion labels and transition dynamics\\
\addlinespace[3pt]
AG-CTR$^2$ \citep{chu2026appraisal} & Event understanding, appraisal, and coping & Retrieve support using appraisal chains & Support generation and retrieval-query quality\\
\addlinespace[3pt]
CHASE \citep{sun2026emomm} & Modality hidden states and attention preferences & Detect conflict and steer head-level attention & Recognition under conflict and missingness\\
\addlinespace[3pt]
\VISTAFocusRow
\VISTAFirstFocus{\method{}} & Concerns, event relations, and expression conditions & Condition modality arbitration and affect prediction &\VISTALastFocus  Conflict gains, correspondence, and decision access\\
\bottomrule
\end{tabularx}
\end{table}
\section{Paired scenes for concern-relative interpretation}
\label{app:scenes}
These constructed analytical illustrations examine three distinct roles of appraisal: identifying what an outcome means, interpreting a public display, and distinguishing disappointment from blame. They are not benchmark samples, model outputs, or members of the counterfactual evaluation. Contextual premises are supplied explicitly; the resulting interpretations are plausible rather than uniquely determined emotion labels.
\begin{figure}[htbp]
\centering
\includegraphics[width=\linewidth]{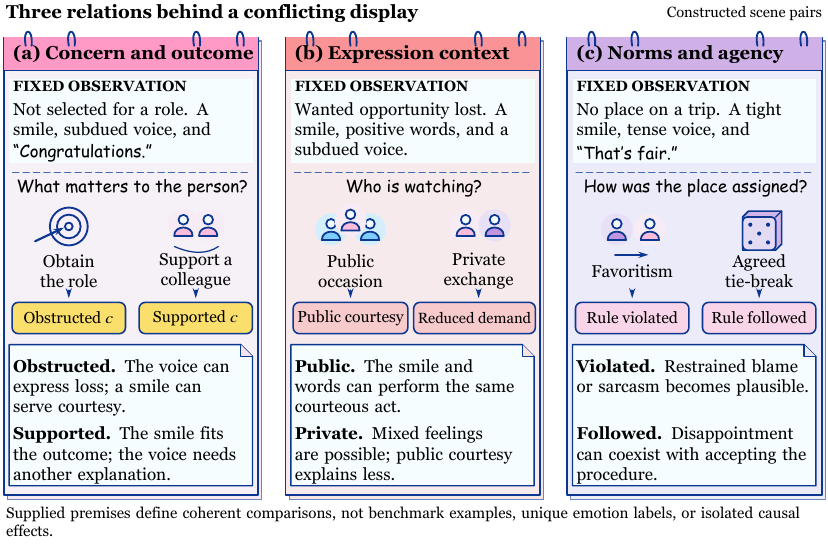}
\caption{\textbf{The context changes what a cue can mean.} Each constructed pair holds the described surface observation fixed while changing the goal, audience, or allocation rule. The contrasts illustrate appraisal relations; they are not evaluated predictions.}
\label{fig:scene_pairs}
\end{figure}
\subsection{The same outcome can support different concerns}
A candidate is not selected for a role. They smile, say ``Congratulations,'' and speak in a subdued voice. If they actively sought the position, the outcome obstructs their goal; courtesy can explain why the positive display coexists with disappointment. If they entered to support a colleague and preferred to avoid the responsibility, the same outcome supports their stated purpose. The subdued voice then needs another explanation, such as fatigue, if the context supports it. Goal congruence thus relates an event to a concern: ``not selected'' alone does not determine whether the outcome is unfavorable. Identifying the concern changes the coherent explanations of the cues even when the final emotion remains uncertain.

\subsection{The same setback can be expressed under different social demands}
After losing a wanted opportunity, a person smiles and says ``I'm happy for them'' in a subdued voice. In front of the selected colleague and an audience, the smile and words can perform the same courteous act. Their agreement need not outweigh the subdued prosody: two cues may share an expressive purpose. In a private conversation with a trusted friend, that public demand is reduced. The same words might express happiness for the colleague alongside personal disappointment, or an attempt at self-regulation. The setting changes the available explanations without choosing one automatically. Norm and regulation fields connect the setback to the conditions under which its response becomes observable.

\subsection{Agency and norms distinguish disappointment from blame}
A person denied a wanted place on a trip says ``That's fair'' with a tight smile and tense voice. If a manager bypassed an agreed procedure to favor a friend, sarcasm or restrained blame becomes plausible. If an agreed random tie-break was followed, the sentence can acknowledge the rule while the voice expresses disappointment. Agency locates responsibility; norms supply the standard for evaluating the allocation. Neither replaces goal congruence: a fair procedure can still yield an unwanted outcome. This pair changes agency and norms together, illustrating a coherent scene contrast rather than a single-coordinate intervention.
\Needspace{7\baselineskip}
\section{Cue diagnosticity and representational interpretation}
\label{app:proof}
\subsection{Emotion expectation and cue diagnosticity}
This section develops the log-odds decomposition in Equation~\ref{eq:semanticlogodds}. For two affective hypotheses $y_1,y_0$, a cue event $E=e$, and appraisal $Z=z$, Bayes' rule gives, whenever the conditioning probabilities and denominators are positive,
\begin{equation}
 \frac{P(y_1\mid e,z)}{P(y_0\mid e,z)}
 =\underbrace{\frac{P(y_1\mid z)}{P(y_0\mid z)}}_{\text{emotion expectation}}
 \underbrace{\frac{P(e\mid y_1,z)}{P(e\mid y_0,z)}}_{\text{cue diagnosticity}}.
 \label{eq:diagnosticityodds}
\end{equation}
The first factor concerns which emotion fits the event appraisal. The second concerns how much the cue distinguishes the two emotions within that appraisal. A blocked goal can make disappointment more plausible; a courtesy obligation can make a smile compatible with disappointment, changing the smile's diagnostic role. Improving the first factor alone would not demonstrate improved cue interpretation. These semantic roles do not map one-to-one onto the branches in Equation~\ref{eq:pathway}: Appraisal-conditioned weights can reflect both expected affect and changing cue diagnosticity, while the joint residual preserves evidence beyond the schema. Path interventions locate a computational dependency; identifying its semantic role additionally requires examining how cue use changes with appraisal.

Equation~\ref{eq:diagnosticityodds} is a conditional-probability identity, not a likelihood model fitted by VISTA. It assumes no independence between modalities and supplies no causal effect. Here $z$ is a conceptual conditioning state; the predicted appraisal $\hat z=A(X)$ can itself depend on the cue, so its first factor is not an estimate made with that cue excluded. It adds no independent observation beyond $X$. The following analysis characterizes possible uses of this information, without assuming that VISTA implements a Bayes module.

\paragraph{How a fixed cue can cross the decision boundary.}
Write $b(z)$ for the log prior odds, $\ell(e,z)$ for the log likelihood ratio, and $L(e,z)$ for the log posterior odds. Equation~\ref{eq:diagnosticityodds} becomes
\begin{align}
 L(e,z)&=b(z)+\ell(e,z),\nonumber\\
 L(e,z')-L(e,z)&=b(z')-b(z)+\ell(e,z')-\ell(e,z).
 \label{eq:logoddsdecomposition}
\end{align}
For binary zero--one decisions, $y_1$ is preferred exactly when $\ell(e,z)>-b(z)$. Thus even at a fixed emotional expectation, a change in cue likelihood can move the posterior across the boundary. Conversely, a prior change can move the decision while leaving cue diagnosticity fixed. The two mechanisms produce the same kind of final-label change but explain it differently.

Figure~\ref{fig:appraisal_geometry} gives exact arithmetic for a constructed example. Let $y_1$ be satisfaction, $y_0$ disappointment, and $e$ a positive display. Two expression conditions, $z_f$ (ordinary expression) and $z_c$ (courtesy), share $P(y_1\mid z)=0.4$ and $P(e\mid y_1,z)=0.8$. Set $P(e\mid y_0,z_f)=0.2$ and $P(e\mid y_0,z_c)=0.6$. Then
\begin{align}
 P(y_1\mid e,z_f)&=\frac{0.4\cdot0.8}{0.4\cdot0.8+0.6\cdot0.2}=\frac{8}{11},\nonumber\\
 P(y_1\mid e,z_c)&=\frac{0.4\cdot0.8}{0.4\cdot0.8+0.6\cdot0.6}=\frac{8}{17}.
 \label{eq:displaytoy}
\end{align}
The display remains more likely under satisfaction in both conditions, but its likelihood ratio decreases from $4$ to $4/3$. With prior odds $2/3$, this is enough to change the preferred hypothesis. The example illustrates reduced diagnosticity rather than declaring a courteous display intrinsically negative.

\begin{figure}[htbp]
\centering
\includegraphics[width=\linewidth]{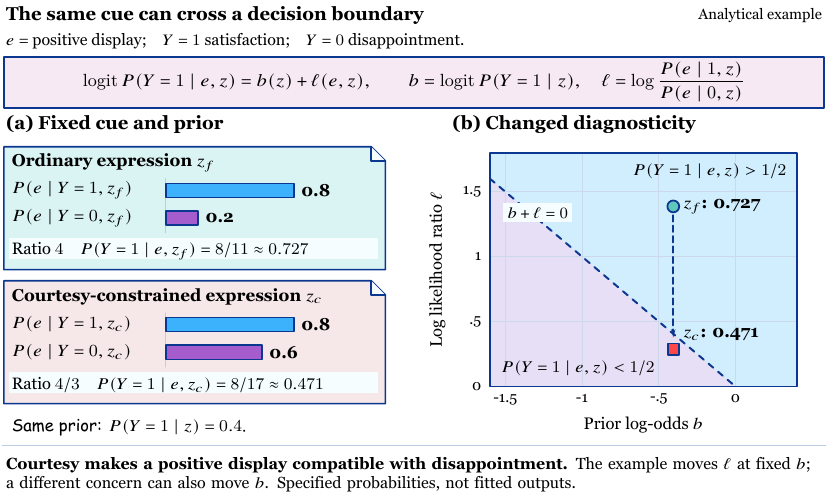}
\caption{\textbf{A context-dependent cue in a constructed probability model.} Both conditions have prior satisfaction probability $0.4$ and positive-display likelihood $0.8$ under satisfaction. The likelihood under disappointment changes from $0.2$ to $0.6$, giving posteriors $8/11$ and $8/17$. The log-odds plane separates emotional expectation from cue diagnosticity; its boundary is $b+\ell=0$. These are exact analytical calculations, not model predictions or fitted probabilities.}
\label{fig:appraisal_geometry}
\end{figure}

\paragraph{A cue contrast removes a purely contextual prior shift.}
At a fixed appraisal, compare two cue events $e_1,e_0$ and define
\begin{equation}
 T(z)=L(e_1,z)-L(e_0,z)=\ell(e_1,z)-\ell(e_0,z).
 \label{eq:cuecontrast}
\end{equation}
Subtracting again across appraisals gives $T(z')-T(z)$, the mixed contrast $\mathcal I$ in Equation~\ref{eq:cueinteractionmain}, in which both prior terms cancel. Under this probability model, a nonzero contrast establishes that the relative contribution of the two cues depends on appraisal; a context-only additive prior shift cannot produce it. This is a sharper semantic question than whether the final prediction changes after replacing appraisal. In an empirical test, the cue and appraisal contrasts would also need a common log-odds response scale and coherent conditioning states. The reported aggregate interventions do not supply this four-condition cue comparison.

\paragraph{Additive scores and cue--appraisal interactions.}
The contrast also characterizes the restriction imposed by an additive score. On a product domain of cue and appraisal values with positive conditional probabilities, fix a reference pair $(u_0,z_0)$. Define
\begin{align}
 s(u)&=L(u,z_0)-L(u_0,z_0),&t(z)&=L(u_0,z),\nonumber\\
 J(u,z)&=L(u,z)-L(u_0,z)-L(u,z_0)+L(u_0,z_0).
 \label{eq:anchoredinteraction}
\end{align}
Direct substitution yields
\begin{equation}
 L(u,z)=s(u)+t(z)+J(u,z).
 \label{eq:interactiondecomposition}
\end{equation}
Thus all anchored contrasts vanish if and only if the log-odds surface is additively separable. A nonzero $J$ cannot be absorbed into any context-only offset while keeping the cue score context-independent. The need for an interaction concerns reproducing the conditional score: an additive model can still predict the same labels on some examples despite having different scores. The product-domain assumption matters because all four conditional states must be defined; unsupported cue--scene combinations do not supply an identifiable contrast.

This gives a design reason to expose evidence and appraisal jointly to a predictor. Equation~\ref{eq:pathway} permits that dependence through appraisal-conditioned modality weights, while $D(H,\alpha)$ preserves the joint-evidence residual. The argument does not select a unique architecture or assign prior and diagnosticity terms to separate neural branches. With predicted appraisal, an empirical four-condition test would additionally distinguish recomputing $A(X)$ after a cue change from replacing its output while fixing the evidence.

The same constructed model makes this contrast explicit without introducing additional parameters. Take $e_1$ to be a positive display and $e_0$ its absence. Under ordinary expression, the two likelihood ratios are $4$ and $(1-0.8)/(1-0.2)=1/4$; under courtesy they are $4/3$ and $(1-0.8)/(1-0.6)=1/2$. Consequently,
\begin{equation}
 T(z_f)=\log 16,\qquad T(z_c)=\log(8/3),\qquad
 T(z_c)-T(z_f)=-\log 6.
 \label{eq:constructedcuecontrast}
\end{equation}
Courtesy reduces the contrast between displaying and withholding a positive response, while the contextual prior remains fixed. These exact quantities concern the constructed distribution in Equation~\ref{eq:displaytoy}; they are not measured effects of VISTA's interventions.

\paragraph{Agreement between modalities need not multiply the evidence.}
For two observed cues $e_t,e_v$, the exact joint likelihood ratio factors as
\begin{equation}
 \frac{P(e_t,e_v\mid y_1,z)}{P(e_t,e_v\mid y_0,z)}
 =\frac{P(e_t\mid y_1,z)}{P(e_t\mid y_0,z)}
  \frac{P(e_v\mid e_t,y_1,z)}{P(e_v\mid e_t,y_0,z)}.
 \label{eq:conditionalcueevidence}
\end{equation}
The second factor measures what the visual cue adds after the text cue is known. If a smile and a congratulation serve a shared courtesy act, they can be strongly dependent given the affective hypothesis and appraisal. Multiplying their marginal likelihood ratios can then double-count shared information. Equation~\ref{eq:conditionalcueevidence} neither privileges text nor assumes a processing order: the chain rule can be written in either order. It explains why agreement among modalities and independent evidence for an emotion are different properties.

\subsection{Uncertain and imperfect appraisal}
\label{app:uncertain}
A partially specified event can support several appraisal hypotheses. To make their use explicit, consider an analytical setting with residual scene evidence $W=w$, a cue $E=e$, and finitely many states $z$. Define the pre-cue weights $\rho_z=P(z\mid w)$ and posterior weights $\pi_z=P(z\mid e,w)$. The law of total probability gives
\begin{align}
 p(y\mid e,w)&=\sum_z K_z(y)\pi_z,
 &K_z(y)&=P(y\mid e,z,w),\nonumber\\
 \pi_z&=\frac{P(e\mid z,w)\rho_z}{\sum_{z'}P(e\mid z',w)\rho_{z'}}.
 \label{eq:appraisalmarginal}
\end{align}
Uncertainty is therefore averaged at the probability level, with weights conditioned on the evidence being explained. Averaging log odds, taking the most likely state first, or averaging with pre-cue weights generally gives another prediction.

For example, assign equal pre-cue weight to the two constructed states in Equation~\ref{eq:displaytoy}. Their display probabilities are $0.44$ and $0.68$, so observing the display changes the appraisal weights to $11/28$ and $17/28$. The resulting satisfaction probability is
\begin{equation}
 \frac{11}{28}\frac{8}{11}+\frac{17}{28}\frac{8}{17}=\frac47\simeq0.5714.
 \label{eq:toymarginal}
\end{equation}
An unweighted average of the two conditional posteriors would instead be approximately $0.5989$. The calculation shows exactly where evidence about expression conditions enters a coherent uncertain interpretation. It does not prescribe a posterior representation or marginalization procedure for the implemented model.

\begin{proposition}[Propagation of appraisal uncertainty]
\label{prop:appraisalerror}
In the finite-state analytical model of Equation~\ref{eq:appraisalmarginal}, let $p=\sum_z\pi_zK_z$ and $\tilde p=\sum_z\tilde\pi_z\tilde K_z$ be two affect distributions. With total variation $\operatorname{TV}(u,v)=\tfrac12\sum_j|u_j-v_j|$, define
\begin{equation}
 \delta=\operatorname{TV}(\pi,\tilde\pi),\qquad
 \eta=\sum_z\tilde\pi_z\operatorname{TV}(K_z,\tilde K_z).
 \label{eq:appraisalerrors}
\end{equation}
Then $\operatorname{TV}(p,\tilde p)\leq\delta+\eta$. If $p$ has maximizing label $y^*$ with margin $\gamma=p(y^*)-\max_{y\ne y^*}p(y)$, its decision is preserved whenever $\gamma>2(\delta+\eta)$.
\end{proposition}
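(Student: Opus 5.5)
I will bound the difference of the two mixtures with a telescoping split that routes through a hybrid distribution, and then derive decision preservation from the total-variation bound. The hybrid is $q=\sum_z\tilde\pi_zK_z$, which uses the perturbed mixing weights but keeps the original kernels. The triangle inequality for total variation gives
\begin{equation*}
 \operatorname{TV}(p,\tilde p)\leq\operatorname{TV}(p,q)+\operatorname{TV}(q,\tilde p).
\end{equation*}
The two terms are controlled in different ways: the first isolates the change in appraisal weights, and the second isolates the change in appraisal-conditional affect kernels.

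\textbf{First term.} For each label $y$, $p(y)-q(y)=\sum_z(\pi_z-\tilde\pi_z)K_z(y)$. I will use the standard fact that for a signed measure $\mu=\pi-\tilde\pi$ with total mass zero, $\tfrac12\sum_y|\sum_z\mu_zK_z(y)|\leq\tfrac12\sum_z|\mu_z|=\operatorname{TV}(\pi,\tilde\pi)$. This is the contraction of total variation under a Markov kernel. The quickest way to verify it is to swap the sums:
\begin{equation*}
 \sum_y\Bigl|\sum_z\mu_zK_z(y)\Bigr|\leq\sum_z|\mu_z|\sum_yK_z(y)=\sum_z|\mu_z|.
\end{equation*}
The zero-mass property is not even needed for this weaker bound, because the factor $\tfrac12$ appears on both sides. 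The first term is therefore at most $\delta$.

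\textbf{Second term.} Here $q-\tilde p=\sum_z\tilde\pi_z(K_z-\tilde K_z)$. By convexity of the absolute value (or the triangle inequality) and since $\tilde\pi_z\ge0$,
\begin{equation*}
 \operatorname{TV}(q,\tilde p)\leq\sum_z\tilde\pi_z\operatorname{TV}(K_z,\tilde K_z)=\eta.
\end{equation*}
The choice of hybrid is what makes this weighting come out against $\tilde\pi$, matching the definition of $\eta$ in Equation~\ref{eq:appraisalerrors}. This is the one point to get right: with the other hybrid $\sum_z\pi_z\tilde K_z$, the weights would be $\pi$ instead. Summing the two terms gives $\operatorname{TV}(p,\tilde p)\leq\delta+\eta$.

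\textbf{Decision preservation.} Let $\epsilon=\operatorname{TV}(p,\tilde p)$. Every coordinate satisfies $|p(y)-\tilde p(y)|\leq\epsilon$. A sharper coordinate bound would also hold, since $\tilde p(y^*)-\tilde p(y)\geq p(y^*)-p(y)-2\epsilon$ follows from $|(p-\tilde p)(y^*)|+|(p-\tilde p)(y)|\leq 2\epsilon$. The crude coordinate bound already suffices. For any $y\ne y^*$,
\begin{equation*}
 \tilde p(y^*)-\tilde p(y)\geq\gamma-2\epsilon>\gamma-2(\delta+\eta)>0,
\end{equation*}
so $y^*$ remains the unique maximizer of $\tilde p$.

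I expect no real obstacle. The only delicate points are choosing the hybrid whose weighting matches $\eta$ and stating the kernel-contraction step cleanly.
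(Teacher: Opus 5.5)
Your proposal is correct and follows the paper's proof essentially step for step. Both use the hybrid $\sum_z\tilde\pi_zK_z$, the triangle inequality, the swap-of-sums bound (using that each $K_z$ sums to one) for the weight term, and the $\tilde\pi$-weighted convexity bound for the kernel term. Both then bound each coordinate by the total variation to get the margin statement. One cosmetic slip: in your final chain, $\gamma-2\epsilon\geq\gamma-2(\delta+\eta)$ should be non-strict, though this does not affect the conclusion.
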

\begin{proof}
Insert $\sum_z\tilde\pi_zK_z$ between the two mixtures and apply the triangle inequality. Since every $K_z$ sums to one,
\begin{align}
 \operatorname{TV}(p,\tilde p)
 &\leq\frac12\sum_y\left|\sum_z(\pi_z-\tilde\pi_z)K_z(y)\right|
       +\sum_z\tilde\pi_z\operatorname{TV}(K_z,\tilde K_z)\nonumber\\
 &\leq\frac12\sum_z|\pi_z-\tilde\pi_z|+\eta=\delta+\eta.
 \label{eq:appraisalerrorproof}
\end{align}
Each label probability changes by at most this total variation. Hence the gap between $y^*$ and any competitor decreases by at most $2(\delta+\eta)$, proving the margin statement.
\end{proof}
The two terms separate misallocating probability among scene interpretations from misreading the cues within an interpretation. When the kernels are shared, $\eta=0$: uncertain appraisal can be harmless to a high-margin decision and consequential near a boundary. A wrong point estimate can also matter little if its induced affect distribution resembles that of the correct state. Field agreement alone does not determine either term. This links the semantic audit to recognition: the relevant question is which appraisal errors alter the competing affective explanations, not how many field names differ.

\subsection{What a predicted representation can contribute}
\label{app:theoryextra}
Because $\hat Z=A(X)$ is computed from the recognition input, it introduces no new observation beyond $X$. Its potential benefit is representational: the schema can retain distinctions omitted by a compressed evidence representation $H=F(X)$, or make existing distinctions easier for a learned decision rule to use. Proposition~\ref{prop:representationrisk} makes the distinction precise for finite-label classification with zero--one loss. The continuous appraisal and affect-regression evaluations use their respective prediction losses and CCC metrics.

\paragraph{Assumptions and conditioning.}
The maps $F$ and $A$ are fixed when risks are evaluated, as after training. Conditional posteriors are understood up to probability-zero states; a fixed ordering of the finite labels resolves ties. The analysis imposes neither conditional independence among modalities nor a sufficiency assumption on $H$. In particular, it does not require the semantic appraisal to be correct: any additional deterministic representation obeys the Bayes-risk inequality, while its decision value depends on the distinctions it retains.

\begin{proof}[Proof of Proposition~\ref{prop:representationrisk}]
Write $v_y(h,z)=P(Y=y\mid H=h,\hat Z=z)$ and select a coarse Bayes label $y_h\in\arg\max_y P(Y=y\mid H=h)$. The tower property gives $P(Y=y\mid H=h)=\mathbb E[v_y(h,\hat Z)\mid H=h]$. Since the maximum of finitely many coordinates is convex,
\begin{equation}
 \max_y\mathbb E[v_y(h,\hat Z)\mid h]
 \leq\mathbb E[\max_y v_y(h,\hat Z)\mid h].
 \label{eq:conditionalmax}
\end{equation}
Averaging and subtracting from one proves the inequality. More explicitly, the exact gain is
\begin{equation}
 \Delta_{\mathrm{repr}}
 =\mathbb E\!\left[\max_y v_y(H,\hat Z)-v_{y_H}(H,\hat Z)\right].
 \label{eq:refinementgap}
\end{equation}
The integrand is nonnegative. The gain vanishes exactly when $y_H$ is also a maximizing label of the refined posterior almost surely. Equivalently, for almost every $h$, there is a label that maximizes the posteriors for almost every $z$ under $\hat Z\mid H=h$. If such a common maximizer fails to exist on a set of $h$ with positive probability, the conditional nonnegative gap is positive there and so is its expectation. This proves both the equality and strictness conditions, including posterior ties. Finally, $X$ and $(X,A(X))$ generate the same information, so their conditional label distributions, and hence Bayes risks, agree.
\end{proof}
For a concrete strict case, suppose $H=h_0$ is constant and two equiprobable appraisal states have class-1 probabilities $0.8$ and $0.2$. The coarse posterior is $0.5$ and its Bayes error is $0.5$; retaining the state gives error $0.2$. If the probabilities are instead $0.6$ and $0.9$, both states favor class 1 and both representations have error $0.25$. Posterior variation matters to classification exactly when it reveals a decision distinction. If $\hat Z$ is already determined by $H$, the common-maximizer condition holds automatically.

\paragraph{How much a collapsed appraisal distinction can cost.}
The strictness condition has a closed form for two binary-label appraisal states within a fixed evidence state $H=h$. Let $P(\hat Z=z_+\mid h)=q\in(0,1)$, and let $p_+>1/2>p_-$ be the class-1 posteriors in the two states. Write their opposing decision margins as $m_+=2p_+-1>0$ and $m_-=1-2p_->0$. The coarse posterior is $\bar p=qp_++(1-q)p_-$, so the conditional Bayes-risk reduction is
\begin{align}
 \Delta(h)
 &=\min\{\bar p,1-\bar p\}-q(1-p_+)-(1-q)p_-\nonumber\\
 &=\min\{q\,m_+,(1-q)m_-\}.
 \label{eq:binaryrefinementgain}
\end{align}
If the coarse classifier selects class 1, its excess conditional error occurs in state $z_-$ and equals $(1-q)m_-$. If it selects class 0, the corresponding cost is $qm_+$. The coarse Bayes rule chooses the smaller of these costs, which proves the formula and includes the case $\bar p=1/2$. The gain is therefore controlled jointly by the frequency and decision margin of the distinction being collapsed. In this two-state setting, averaging $\Delta(h)$ over $H$ recovers Equation~\ref{eq:refinementgapmain}; for $q=1/2$, $p_+=0.8$, $p_-=0.2$, the gain is $0.3$. If both states share an optimal label, the gain is zero instead, as in the $0.6/0.9$ example above. These are analytical distributions, not estimated appraisal posteriors.

\paragraph{From representation opportunity to trained risk.}
For any readout $f_U$, its excess zero--one risk has the posterior form
\begin{equation}
 \epsilon_U
 =\mathbb E\!\left[\max_y P(Y=y\mid U)-P(Y=f_U(U)\mid U)\right]\geq0.
 \label{eq:excessposterior}
\end{equation}
Substituting $\mathcal R(f_U)=\mathcal R^*(U)+\epsilon_U$ for $U=H$ and $U=(H,\hat Z)$ gives Equation~\ref{eq:riskaccounting}. Hence the refined readout improves exactly when $\epsilon_{H,\hat Z}-\epsilon_H<\Delta_{\mathrm{repr}}$. A positive representation opportunity can absorb some additional readout error; a sufficiently large increase in that error can instead offset it. Even when $\Delta_{\mathrm{repr}}=0$, an explicit schema can help a trained predictor if it makes the same Bayes decision easier to learn, reducing excess risk. When $\hat Z$ is a function of $H$, this latter route is the only one of the two available.

For the $0.8/0.2$ construction, an optimal coarse readout has risk $0.5$. A refined readout that ignores appraisal and always predicts class 1 has the same risk: its excess risk $0.3$ exactly offsets the representation gain. A refined readout that selects the wrong label in each state has risk $0.8$ and excess risk $0.6$; the correct refined Bayes readout has risk $0.2$ and excess risk zero. The same representation opportunity can therefore yield improvement, equality, or deterioration according to how the readout uses it.

\paragraph{Connection to the experimental comparisons.}
Generic semantics versus structured appraisal tests the choice of intermediate organization; cross-sample and within-emotion shuffling probe scene correspondence. These comparisons examine design consequences for learned behavior. Equation~\ref{eq:riskaccounting} compares readouts of fixed $H$ and $(H,\hat Z)$; separately trained common-backbone systems need not share the same learned $F$. Their aggregate accuracies therefore do not identify $\Delta_{\mathrm{repr}}$ or either excess-risk term. Likewise, appraisal replacement changes the arbitration weights and does not by itself estimate the cue interaction in Equation~\ref{eq:cueinteractionmain}. The theory specifies why joint conditioning and decision-relevant distinctions can matter; the experiments assess their benefit in the observed conflict settings.
\section{A guide to appraisal and scene interventions}
\label{app:interventionguide}
The contrasts distinguish whether appraisal belongs to the current scene, whether the decision uses it, and how prediction responds when scene meaning changes.

\subsection{Correspondence, connection, and content}
For a fixed input, write
\begin{equation}
 q_X(z)=D_{\psi}\bigl(H,\alpha(H,z)\bigr),\qquad H=F_{\theta}(X).
 \label{eq:appguideqx}
\end{equation}
The usual prediction evaluates this function at the predicted appraisal $\hat z=A_{\theta}(X)$. An alternative appraisal supplies a different reference for interpreting the same evidence. Table~\ref{tab:interventiontaxonomy} organizes the reported contrast families by the dependency they target.

\begin{table}[htbp]
\caption{Interpretive roles of the intervention families. Each row identifies a distinct question about the appraisal-conditioned decision pathway.}
\label{tab:interventiontaxonomy}
\small
\begin{tabularx}{\linewidth}{@{}>{\raggedright\arraybackslash}p{2.55cm}YY@{}}
\toprule
\VISTAHeaderRow
\VISTAFirstHeader{Contrast} & \VISTAHeader{Reference retained} & \VISTALastHeader{Dependency examined}\\
\midrule
Cross-sample shuffle & The current input and recognition task & Does another sample's appraisal supply the same useful interpretation as the scene's own appraisal?\\
Within-emotion shuffle & The current input and the donor's emotion identity & Does event-specific correspondence matter beyond the donor appraisal's association with the same class?\\
Appraisal with no decision connection & Auxiliary appraisal supervision & Does appraisal improve prediction when its forward connection to the decision is removed?\\
Generic semantic bottleneck & An intermediate semantic route to the same recognition task & Does organizing content as event appraisal contribute beyond the supplied semantic alternative?\\
Rationale / internal-state corruption & One of verbal presentation and internal appraisal is retained while the other is altered & Is the observed dependence stronger on displayed explanation text or on the internal appraisal information?\\
Relevant change / irrelevant paraphrase & The pairing identifies what event meaning should change or remain invariant & Does the response follow a relevant change while remaining stable to a reformulation of the same meaning?\\
\bottomrule
\end{tabularx}
\end{table}

Within-emotion shuffling preserves a donor's emotion identity while changing its relation to the current scene. Scenes that share an emotion can differ in their concerns, responsible agents, and expression conditions. Cross-sample shuffling relaxes even that class-level constraint. When strata use test gold labels, within-emotion shuffling is a gold-stratified offline mechanism diagnostic. Both replacements keep the checkpoint fixed and exchange the complete state. The connection ablation preserves auxiliary appraisal supervision while removing $z$ from arbitration; the ablated model is retrained. It asks whether predicting appraisal is sufficient without using it to select modality weights. Stop-gradient retains the forward information and changes its learning path, so it is a distinct operation.

\Needspace{8\baselineskip}
\subsection{Scene edits and representation replacement}
A scene edit can change both $H=F(X)$ and $A(X)$. Direct replacement of $z$ in $q_X(z)$ holds the evidence representation fixed. A training-term ablation changes the learned system. These intervention channels examine different dependencies even when they concern the same field.

Scene coherence also matters. If the goal changes while the outcome stays fixed, goal congruence may need to change with it. If the setting becomes public, the norm and expression-regulation interpretation may change together. A single-coordinate substitution can be useful as a representation probe, while a coherent event edit may involve several related fields. The paired scenes in Appendix~\ref{app:scenes} illustrate the latter relation between event facts and appraisal.

\subsection{Relevant response and label change are separate judgments}
A relevant event change need not force a new final label. An appeal can increase perceived control while leaving disappointment plausible; a formal congratulation can change a smile's diagnostic role without changing the strongest affective interpretation. Pair evaluation therefore distinguishes changes in appraisal, cue interpretation, and final prediction. Irrelevant paraphrases provide the complementary test: wording changes while the concern, outcome, and expression conditions are preserved.

\subsection{How appraisal changes the weighted readout}
\label{app:pathidentification}
The classification pathway makes the response to appraisal replacement explicit. Fix the checkpoint, every representation in $H$, and the availability mask $b$. Let $\mathcal M_b=\{m:b_m=1\}$ be the nonempty set of available modalities. For a reference appraisal $z$ and replacement $\tilde z$, write $\alpha=\alpha(H,z)$, $\tilde\alpha=\alpha(H,\tilde z)$, and $\Delta\alpha=\tilde\alpha-\alpha$. The branch logits $u_m=W_mh_m$ and residual $\rho=W_fh_{AVT}$ remain fixed. The two final logit vectors are
\begin{equation}
 v(z)=\rho+\sum_{m\in\mathcal M_b}\alpha_m u_m,\qquad
 v(\tilde z)=\rho+\sum_{m\in\mathcal M_b}\tilde\alpha_m u_m.
 \label{eq:fixedhreadout}
\end{equation}
Figure~\ref{fig:pathcontrasts} illustrates this comparison. Appraisal changes the allocation among the fixed branch logits; the shared residual cancels when taking their difference.

\begin{figure}[htbp]
\centering
\includegraphics[width=\linewidth]{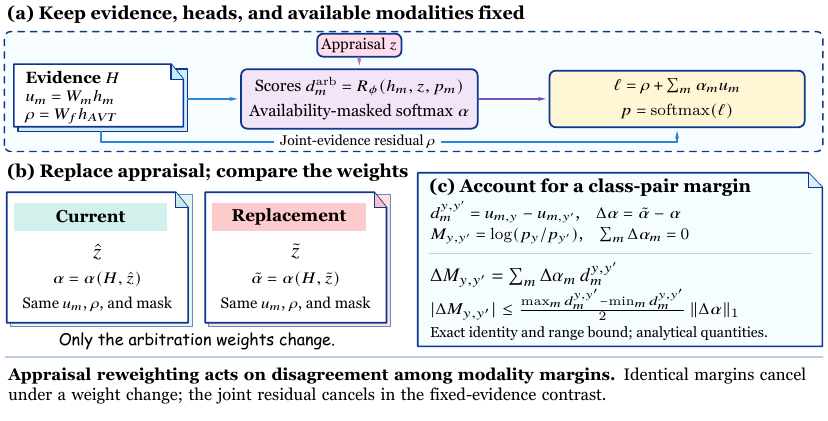}
\caption{\textbf{Appraisal replacement in the implemented weighted readout.} With the checkpoint, all evidence representations $H$, and available modalities fixed, reference and replacement appraisals produce different weights over the same unimodal logits. The joint-evidence residual is shared. For any class pair, the resulting logit-margin change is the weight change dotted with the branch margins; its magnitude is bounded by their range times half the $\ell_1$ weight change. The comparison describes how appraisal-dependent arbitration changes a class preference.}
\label{fig:pathcontrasts}
\end{figure}

\paragraph{Exact margin change.}
For two classes $y,y'$, define each branch's margin $d_m^{y,y'}=u_{m,y}-u_{m,y'}$ and the final margin $M_{y,y'}(z)=v_y(z)-v_{y'}(z)$. Since the final probabilities are a softmax, this also equals $\log[q_X(z)(y)/q_X(z)(y')]$. Subtraction in Equation~\ref{eq:fixedhreadout} gives
\begin{equation}
 M_{y,y'}(\tilde z)-M_{y,y'}(z)
 =\sum_{m\in\mathcal M_b}\Delta\alpha_m d_m^{y,y'},\qquad
 \sum_{m\in\mathcal M_b}\Delta\alpha_m=0.
 \label{eq:arbitrationmargin}
\end{equation}
The second identity follows because both weight vectors sum to one. Thus, moving weight from a branch with a smaller $y$--$y'$ margin to one with a larger margin increases the final preference for $y$ relative to $y'$. Opposing branch margins give appraisal a concrete means of changing which interpretation is favored. If all available branches have the same margin for this class pair, every redistribution leaves that margin unchanged.

\paragraph{A bound governed by branch disagreement.}
Let $d_{\max}=\max_{m\in\mathcal M_b}d_m^{y,y'}$ and $d_{\min}=\min_{m\in\mathcal M_b}d_m^{y,y'}$. The zero-sum identity yields the range bound
\begin{equation}
 \big|M_{y,y'}(\tilde z)-M_{y,y'}(z)\big|
 \leq\frac{d_{\max}-d_{\min}}{2}\,\|\Delta\alpha\|_1.
 \label{eq:arbitrationbound}
\end{equation}
To prove it, subtract $c=(d_{\max}+d_{\min})/2$ inside the sum in Equation~\ref{eq:arbitrationmargin}. This leaves the sum unchanged, and $|d_m^{y,y'}-c|\leq(d_{\max}-d_{\min})/2$. The triangle inequality completes the proof. Equivalently, the total positive and negative weight changes each have mass $\|\Delta\alpha\|_1/2$. The bound is attained when that mass is transferred between branches at the two extreme margins, whenever the corresponding weights are feasible.

These identities connect arbitration to conflicting evidence: appraisal can affect a class comparison when available branches differ in their support for that comparison. The sign of the weighted change determines which class is favored; the bound does not prescribe that sign or guarantee higher accuracy. The shared residual affects the reference margin and hence whether a given change crosses a decision boundary. In a multiclass task, identifying the new predicted label requires comparing the final logits across all classes.

The fixed-$H$ comparison isolates the implemented forward use of appraisal. Scene edits can also change branch logits and the residual, while retraining the connection ablation can change the whole predictor. Aggregate shuffle accuracies measure recognition after replacement; estimating Equation~\ref{eq:arbitrationmargin} per example additionally requires the paired internal weights and logits. External explanations are evaluated separately by holding $H$, $z$, and $\alpha$ fixed while rewriting the displayed text.

\subsection{Semantic organization and evaluative references}
\label{app:concernspecific}
The tested variants separate field names, represented content, scene correspondence, and decision access. Randomized field names retain 64.2\% conflict accuracy, close to the complete system's 64.5\%; a generic semantic bottleneck gives 63.2\%. The difference is associated with the organized appraisal content rather than the literal names of the fields. The goal and norm fields provide its evaluative reference, while expression regulation relates that reference to the observed display.

Goal congruence is defined relative to the person's concern. Consequently, coherent appraisal edits can involve more than one field: a changed goal can change congruence, and a changed audience can change social relevance and regulation. The constructed scenes in Appendix~\ref{app:scenes} make these dependencies explicit. The empirical intervention categories in Appendix~\ref{app:cfreconciliation} test directed responses to relevant changes, while unrelated reformulations test stability.

\subsection{Grounding the interface in observed context}
\label{app:grounding}
Context $C$ contains information available at recognition time. The goal and setting in Figure~\ref{fig:scene} are supplied premises; the illustration establishes a conditional interpretation of the same cue. In data, grounding asks whether utterances, visible events, or conversation history support the predicted field.

The field audit evaluates 400 candidates, recording coverage, sampling consistency, and adequate evidence separately. A field can be consistent across teacher samples yet poorly grounded; label-visible supervision makes this distinction concrete in Appendix~\ref{app:annotation}. Partial-field masking retains supported relations without treating every field as observed. Likewise, a wrong goal, an unsupported regulation inference, and a perceptual failure belong to different error categories. The grouped error analysis in Appendix~\ref{app:costerrorreading} connects these interpretive distinctions to the final recognition outcomes.
\clearpage
\section{Dataset roles and scoring conventions}
\label{app:data}
\begin{table}[H]
\caption{Public dataset context and the role of each evaluation. Dataset composition and the evaluation question are listed separately.}
\small
\begin{tabularx}{\linewidth}{@{}p{2.1cm}YY@{}}
\toprule
\VISTAHeaderRow
\VISTAFirstHeader{Dataset} & \VISTAHeader{Public context} & \VISTALastHeader{Role in this paper}\\
\midrule
CA-MER & 1,500 examples: 500 video-aligned, 500 audio-aligned, 500 consistent & Primary comparison of conflict and consistent performance\\
EmoMM & 4,000 base examples from Chinese CH-SIMS v2.0 and English CMU-MOSI; multimodal and unimodal sentiment annotations & Alignment, conflict, missingness, and their combination\\
CH-SIMS v2.0 & 4,402 labeled and 10,161 unlabeled Chinese clips, with multimodal and unimodal sentiment information & Binary accuracy (Acc2) across four conflict groups\\
THERADIA & 2,735 affect-annotated clips from interactions during cognitive exercises, with appraisal annotations & Frozen-representation appraisal probes and emotion-intensity regression\\
MELD & 13,708 utterances across 1,433 dialogues; official train/development/test sizes of 9,989/1,109/2,610 & Ordinary seven-class conversational emotion recognition\\
\bottomrule
\end{tabularx}
\end{table}

\paragraph{CA-MER and the evaluation interface.}
CA-MER supplies three annotation-defined groups, each with 500 examples: video-aligned conflict, audio-aligned conflict, and consistency \citep{han2025camer}. The present experiment uses a normalized output interface for all compared systems. Conflict Avg is the macro average of the two conflict accuracies; consistent accuracy, overall accuracy, and invalid-output rate are reported separately. MoSEAR and CHASE are evaluated through this same interface (Table~\ref{tab:externalcamer}). These local scores are distinguished from their original papers' open-vocabulary evaluation scores. The public benchmark's semantic set-matching convention is illustrated below as metric context, not used to reinterpret the present accuracy columns.

\paragraph{Nine-class output parsing.}
\label{app:camerparser}
The canonical vocabulary is \texttt{angry}, \texttt{happy}, \texttt{surprise}, \texttt{fear}, \texttt{sad}, \texttt{worry}, \texttt{neutral}, \texttt{doubt}, and \texttt{contempt}. The parser first strictly parses JSON and extracts a unique string-valued \texttt{final\_label}. It then applies Unicode NFKC normalization, trims leading and trailing whitespace, lowercases the string, applies the fixed synonyms in Table~\ref{tab:camersynonyms}, and checks the canonical vocabulary. A missing field, multiple answers, refusal, or unknown category is invalid and counts as an error in the main accuracy denominator. The output parser does not select an answer manually from explanatory prose.

\begin{table}[htbp]
\centering\small
\caption{\textbf{Fixed synonym normalization for the CA-MER interface.} Canonical labels pass through unchanged. The mapping is applied after JSON extraction and string normalization.}
\label{tab:camersynonyms}
\begin{tabularx}{\linewidth}{@{}YYYY@{}}
\toprule
\VISTAHeaderRow
\VISTAFirstHeader{Input}&\VISTAHeader{Canonical}&\VISTAHeader{Input}&\VISTALastHeader{Canonical}\\
\midrule
anger&angry&sadness&sad\\
happiness&happy&worried&worry\\
surprised&surprise&doubtful&doubt\\
fearful; afraid&fear&contemptuous&contempt\\
\bottomrule
\end{tabularx}
\end{table}

\subsection{Conflict definitions and error burden}
\label{app:conflictdefinition}
Equation~\ref{eq:conflictdefinition} separates the affect annotations used to define a conflict from the model whose recognition performance is evaluated. An eligible comparison concerns the same person, event, and temporal unit, with at least two available, affect-annotated modalities. Recognition-time context $C$ can explain a disagreement, but is not a fourth independently annotated affect channel. In particular, an error made by VISTA does not determine membership in the conflict subset.

\paragraph{Scalar sentiment: a specified scale and strict cutoff.}
For modality-only ratings $s_i^m\in[-1,1]$, the maximum pairwise gap has the equivalent range form
\begin{equation}
 \delta_i=\max_{m<n}|s_i^m-s_i^n|
 =\max_m s_i^m-\min_m s_i^m,\qquad
 \kappa_\tau(X_i)=\mathbf 1\{\delta_i>\tau\}.
 \label{eq:scalargap}
\end{equation}
The strong-conflict screening rule in \citet{wang2025diffemo} fixes $\tau=1$. It operates on the original rating scale supplied by separate modality annotations \citep{liu2022chsims}, and uses strict $>$: a gap exactly equal to 1 does not qualify. It should not be applied to arbitrary numeric category identifiers. The 661/4,402 count refers to the labeled CH-SIMS v2.0 corpus before severe cases are selected for the DiffEmo test. Table~\ref{tab:conflictevidence} keeps that corpus denominator separate from the test and error-audit denominators.

\Needspace{8\baselineskip}
\paragraph{Categorical affect: agreement classes and benchmark eligibility.}
Let $\equiv$ denote the benchmark's affect-label equivalence. A categorical pair has $d(q^m,q^n)=\mathbf 1\{q^m\not\equiv q^n\}$, with $\tau=0$. For our audit of the original CH-SIMS release \citep{yu2020chsims}, we use the fixed sentiment mapping in \citet{zhang2021crossmodal}:
\begin{equation}
 q(s)=\begin{cases}
 -1,&s<-0.1,\\
 0,&-0.1\leq s\leq0.1,\\
 +1,&s>0.1.
 \end{cases}
 \qquad
 K_i=\mathbf 1\{|\{q(s_i^t),q(s_i^a),q(s_i^v)\}|>1\}.
 \label{eq:chsimslabelaudit}
\end{equation}
All 2,281 released clips have complete modality ratings and unique video--clip identifier pairs; none is excluded. Summing $K_i$ gives 1,117/2,281, or 48.97\% (49.0\% in the introduction). This is a fresh descriptive count from public annotations, independent of VISTA predictions. It includes 455 clips with both positive and negative modalities, and 662 additional clips whose disagreement involves neutral and one non-neutral polarity. Each clip is counted once, even if multiple modality pairs disagree. The release contains movie, television-series, and variety-show clips \citep{yu2020chsims}; the frequency characterizes this corpus. Table~\ref{tab:chsimslabelaudit} gives pairwise and split-level counts. All counts follow Equation~\ref{eq:chsimslabelaudit}, comparing the original decimal annotations directly with the stated thresholds.

\begin{table}[htbp]
\centering\small
\caption{\textbf{Public-label audit of cross-modal disagreement on CH-SIMS.} All rows use Equation~\ref{eq:chsimslabelaudit}, with a restricted modality pair for the three pairwise rows. Pairwise rows overlap and are not summed. The final three rows partition the full corpus by the released split.}
\label{tab:chsimslabelaudit}
\begin{tabularx}{\linewidth}{@{}Yrrr@{}}
\toprule
\VISTAHeaderRow
\VISTAFirstHeader{Comparison / population} & \VISTAHeader{Disagreement} & \VISTAHeader{Eligible clips} & \VISTALastHeader{Rate (\%)}\\
\midrule
Any text--audio--video pair & 1,117 & 2,281 & 48.97\\
Text--audio & 800 & 2,281 & 35.07\\
Text--video & 945 & 2,281 & 41.43\\
Audio--video & 602 & 2,281 & 26.39\\
\midrule
Any pair: train & 676 & 1,368 & 49.42\\
Any pair: validation & 231 & 456 & 50.66\\
Any pair: test & 210 & 457 & 45.95\\
\bottomrule
\end{tabularx}
\end{table}

For CA-MER's audio, video, and multimodal construction labels $(q^a,q^v,q^{av})$, the evaluated groups are
\begin{equation}
\begin{aligned}
 \mathcal D_v&=\{i:q_i^v\equiv q_i^{av},\ q_i^a\not\equiv q_i^{av}\},\\
 \mathcal D_a&=\{i:q_i^a\equiv q_i^{av},\ q_i^v\not\equiv q_i^{av}\},\\
 \mathcal D_0&=\{i:q_i^a\equiv q_i^v\equiv q_i^{av}\}.
\end{aligned}
\label{eq:categoricalpartitions}
\end{equation}
Thus $\mathcal D_{\conf}=\mathcal D_v\cup\mathcal D_a$ and $\mathcal D_{\cons}=\mathcal D_0$. These are the public, annotator-checked partitions \citep{han2025camer}. Cases with all three labels different lie outside this three-group evaluation. The fixed category membership is independent of the evaluated model's confidence and output parser. ``Aligned'' describes agreement with the multimodal reference, not temporal synchronization.

\paragraph{Complementary definitions and ordered severity.}
EmoMM defines conflict when at least one available modality's annotated polarity differs from the multimodal reference; missingness is a separate controlled condition \citep{sun2026emomm}. Writing $\operatorname{pol}$ for the polarity mapping, its reference-based predicate is $\kappa_{\mathrm{ref}}(X)=\mathbf 1\{\exists m:\operatorname{pol}(q^m)\neq\operatorname{pol}(q^{\mathrm{joint}})\}$. It can flag a case even when unimodal polarities agree with each other but disagree with the joint reference. Polarity also differs from fine-grained affect categories: anger and sadness can be categorically distinct yet both negative. The CH-SIMS v2.0 Q1--Q4 results in Table~\ref{tab:chsims} use conflict strength calculated from unimodal labels, with group boundaries fixed on the training set. Equation~\ref{eq:scalargap} documents the literature's separate strong-conflict prevalence criterion; the label-visibility diagnostic additionally uses the reported threshold $C\geq2$ (Appendix~\ref{app:supervisionreading}). These named evaluation rules are not interchangeable.

\paragraph{Evidence across settings.}
In filtered Twitter image--text data, \citet{pan2024hybrid} report 42.5\% inconsistency in MVSA-Single and 26.0\% in MVSA-Multiple; their rule counts neutral versus positive/negative labels after opposite-polarity pairs have been removed. For user-generated video reviews, \citet{du2025sentiment} report 63.69\% agreement between textual and multimodal sentiment labels on UniC, hence 36.31\% disagreement. These complementary observations establish substantial cross-modal variation across data settings; the comparison target, annotation granularity, and corpus sampling determine which frequency is measured.

\begin{table}[htbp]
\centering
\caption{\textbf{Quantitative motivation with explicit denominators.} A public-label audit, published corpus analyses, and selected evaluation subsets answer different questions. Arithmetic gaps are calculated from the cited values.}
\label{tab:conflictevidence}
\small\setlength{\tabcolsep}{4pt}
\begin{tabularx}{\linewidth}{@{}>{\raggedright\arraybackslash}p{2.30cm}>{\raggedright\arraybackslash}p{2.20cm}Y@{}}
\toprule
\VISTAHeaderRow
\VISTAFirstHeader{Quantity} & \VISTAHeader{Reported evidence} & \VISTALastHeader{Population and interpretation}\\
\midrule
Three-modal disagreement & $48.97\%$\newline (1,117/2,281) & Our public-label audit of CH-SIMS: at least two unimodal polarity labels differ; Table~\ref{tab:chsimslabelaudit}.\\
\addlinespace[3pt]
Image--text inconsistency & $42.5\%$ / $26.0\%$ & Filtered MVSA-Single / Multiple; neutral versus positive or negative modality labels \citep{pan2024hybrid}.\\
\addlinespace[3pt]
Text--joint disagreement & $36.31\%$ & UniC video reviews: complement of 63.69\% text-only versus multimodal label agreement \citep{du2025sentiment}.\\
\addlinespace[3pt]
Strong-conflict prevalence & $15.0\%$\newline (661/4,402) & Labeled CH-SIMS v2.0 clips screened at $\delta>1$; corpus frequency \citep{wang2025diffemo}.\\
\addlinespace[3pt]
Conflict difficulty & $23.53$ pp & MulT binary accuracy: 89.13\% aligned, 65.60\% conflict; selected DiffEmo tests, 173 examples per group \citep{wang2025diffemo}.\\
\addlinespace[3pt]
Primary benchmark mix & 1,000/1,500\newline conflict & CA-MER's equal allocation across two conflict directions and consistency; evaluation design \citep{han2025camer}.\\
\bottomrule
\end{tabularx}
\end{table}

\paragraph{Three distinct error questions.}
For a common evaluation population, let $E_i=\mathbf 1\{\hat y_i\neq y_i\}$ denote a specified single-label prediction error and $K_i=\kappa_\tau(X_i)$. Conflict prevalence is $\pi=P(K=1)$; conditional error rates are $e_1=P(E=1\mid K=1)$ and $e_0=P(E=1\mid K=0)$. Bayes' rule gives the fraction of errors that occur on conflict inputs:
\begin{equation}
 \rho=P(K=1\mid E=1)
 =\frac{\pi e_1}{\pi e_1+(1-\pi)e_0},
 \qquad P(E=1)>0.
 \label{eq:conflicterrorburden}
\end{equation}
This is a concentration of errors within a common sampling frame. Causal attribution additionally asks which errors would change under a specified conflict-resolution intervention. An error-category audit supplies a different descriptive quantity. For an error group $G$ and category $k$, write
\begin{equation}
 \eta(k\mid G)=
 \frac{\#\{\text{reviewed errors in }G\text{ assigned category }k\}}
 {\#\{\text{reviewed errors in }G\}}.
 \label{eq:arbitrationauditshare}
\end{equation}
The final error analysis instead conditions on three paired-model groups: errors made only by Emotion-SFT, errors made only by VISTA, and errors shared by both models. Table~\ref{tab:errors} preserves these separate denominators. A category share within one of these groups is neither a corpus conflict prevalence nor the fraction of all recognition errors caused by conflict.

Equation~\ref{eq:conflicterrorburden} requires prevalence and both conditional rates from the same population. The corpus screening frequency and severity-selected DiffEmo test rates therefore remain separate measurements.

\paragraph{Threshold sensitivity and a reproducible audit.}
For a fixed eligible sample and a common rating scale,
\begin{equation}
 \widehat\pi(\tau)=\frac1N\sum_{i=1}^{N}\mathbf 1\{\delta_i>\tau\},
 \quad \tau_1<\tau_2\ \Longrightarrow\
 \widehat\pi(\tau_2)\leq\widehat\pi(\tau_1).
 \label{eq:thresholdprevalence}
\end{equation}
A threshold changes which severity levels are counted; monotonic prevalence does not imply monotonic model error. A reproducible analysis stores sample IDs, independently obtained modality labels, missingness, rating normalization, the fixed comparison rule, prediction and reference labels, and any audit category. It reports the eligible count, conflict count, and error count together. Threshold selection uses annotation semantics or validation data and is fixed before examining test performance; a sensitivity curve can then report how prevalence and recognition change across cutoffs. These definitions specify the records needed to reproduce prevalence and error-burden estimates.

\subsection{Public benchmark metrics and the normalized interface}
Let $\mathcal P_i$ and $\mathcal G_i$ denote predicted and reference emotion sets after semantic matching in the public CA-MER protocol. With $t_i=|\mathcal P_i\cap\mathcal G_i|$, $p_i=|\mathcal P_i|$, and $g_i=|\mathcal G_i|$, its nonempty-set metrics are
\begin{equation}
 a_i=\frac{t_i}{p_i},\qquad r_i=\frac{t_i}{g_i},\qquad
 b_i=\frac{a_i+r_i}{2}.
 \label{eq:setmetrics}
\end{equation}
These quantities explain why a published open-vocabulary score and a local normalized-interface accuracy can differ. In particular, the public Acc is precision-like for multiple predicted labels. Figure~\ref{fig:label_sets} gives an exact constructed illustration. It contains no trained-model evaluation results.

\begin{figure}[htbp]
\centering
\includegraphics[width=\linewidth]{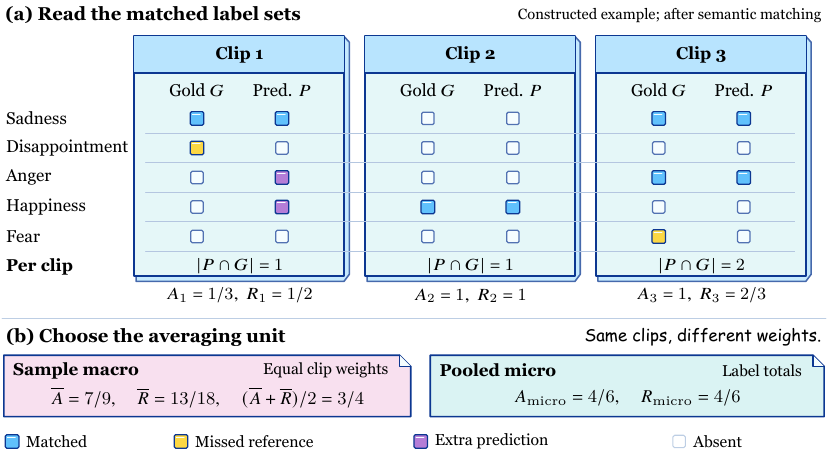}
\caption{\textbf{A constructed illustration of the public set-matching metric.} After semantic matching, three illustrative clips give equal-clip Acc $7/9$, recall $13/18$, and their arithmetic mean $3/4$. Pooling memberships instead gives $2/3$ for both metrics. This public-protocol example is separate from the normalized-interface experiments reported in this paper.}
\label{fig:label_sets}
\end{figure}

Equal-clip averages and pooled memberships assign different weights:
\begin{equation}
 a_{\mathrm{pool}}=\frac{\sum_i t_i}{\sum_i p_i}
 =\sum_i\frac{p_i}{\sum_jp_j}a_i,\qquad
 r_{\mathrm{pool}}=\frac{\sum_i t_i}{\sum_i g_i}
 =\sum_i\frac{g_i}{\sum_jg_j}r_i.
 \label{eq:setpooling}
\end{equation}
Accordingly, the present work compares MoSEAR, CHASE, and VISTA only under their shared normalized interface; their original published scores remain distinct comparison contexts.

\paragraph{Conflict and overall subset weights.}
CA-MER's two conflict directions receive equal weight in Conflict Avg. Across the three equally sized groups,
\begin{equation}
 A_{\conf}=\frac{A_v+A_a}{2},\qquad
 A_{\mathrm{all}}=\frac{A_v+A_a+A_{\cons}}{3}
 =\frac23 A_{\conf}+\frac13 A_{\cons}.
 \label{eq:camerweights}
\end{equation}
For VISTA, the displayed subgroup means give $(61.0+68.0+74.2)/3\approx67.7\%$, matching the reported overall accuracy after rounding. The conflict gain and conflict specificity ask complementary questions: how much recognition improves under conflict, and how much of that improvement exceeds the consistent-condition gain.

\subsection{Complementary evaluation settings}
\paragraph{EmoMM: evaluation without benchmark-specific adaptation.}
EmoMM draws 2,000 Chinese and 2,000 English base examples and constructs conflict and missing-modality conditions \citep{sun2026emomm}. Our evaluation follows the source-level test split with 800 sources; a source can generate multiple condition-specific inputs, so this source count is distinct from the input count for any condition. In our evaluation, all five core models transfer without EmoMM adaptation. Conflict-plus-missing inputs overlap with the missing condition, so the four-condition macro average is an equally weighted condition summary rather than full-sample accuracy. CHASE is listed separately because its original router used EmoMM supervision. EmoMM and the common training pool share CH-SIMS v2 source material. Freezing specifies the absence of EmoMM-specific updates; sample disjointness requires a separate source-identity comparison.

\paragraph{CH-SIMS v2.0: task adaptation and conflict severity.}
Trained models receive CH-SIMS-specific adaptation; Base is evaluated zero-shot. On complete text, audio, and video labels $y_i^T,y_i^A,y_i^V\in[-1,1]$, conflict intensity is
\begin{equation}
 \begin{aligned}
 C_i&=|y_i^T-y_i^A|+|y_i^T-y_i^V|+|y_i^A-y_i^V|\\
 &=2\left(\max_{m\in\{T,A,V\}}y_i^m-\min_{m\in\{T,A,V\}}y_i^m\right)\in[0,4].
 \end{aligned}
 \label{eq:simsconflictintensity}
\end{equation}
Thus $C_i\geq2$ means a unimodal range of at least $1$. It measures label disagreement on the original scale; a missing modality label is not replaced by zero.

Our protocol specifies train/development/test sizes of $2{,}722/647/1{,}034$.\footnote{These supplied split sizes sum to $4{,}403$, whereas the public corpus description gives $4{,}402$ labeled clips. We retain the protocol split counts and public corpus size separately.} The training split defines Q1--Q4 by lexicographically sorting the pair $(C_i,h_i)$, where $h_i$ is a stable-ID hash used to break ties. With 2,722 complete training examples, the cut keys are at ranks 680, 1,361, and 2,041. For a test key $q=(C,h)$ and ordered cut keys $b_1,b_2,b_3$, the groups are $q\leq b_1$, $b_1<q\leq b_2$, $b_2<q\leq b_3$, and $q>b_3$. Training-defined keys are applied unchanged to test inputs; rounding the conflict values alone would discard the tie rule.

Acc2 includes zero-valued reference labels and splits sentiment into $\leq0$ and $>0$. MAE uses the $[-1,1]$ target scale. An invalid generated output counts as incorrect for Acc2 and receives an absolute-error penalty of $2$ for MAE. Table~\ref{tab:chsims} reports Acc2 for all four groups, overall Acc2, and Q4 MAE. Unlabeled public clips are not added to a labeled evaluation denominator.

\paragraph{THERADIA: representation and downstream use.}
The THERADIA WoZ split contains 1,110 training, 851 development, and 774 test video entries; field-valid sample counts depend on the annotation required by each task. For our appraisal probe, each backbone is frozen and receives the same four-dimensional linear readout. Novelty, intrinsic pleasantness, goal conduciveness, and coping are the available human appraisal dimensions \citep{fournier2024theradia}. Novelty is not identified with VISTA's Expectation field, and these labels do not cover the entire seven-field schema. A separate regression predicts the intensities of ten named emotions to evaluate whether appraisal improves downstream prediction (Appendix~\ref{app:learnabilityuse}).

\paragraph{MELD: ordinary recognition.}
In our evaluation, trained models receive MELD-specific adaptation; Base uses zero-shot evaluation. Weighted-F1, macro-F1, and accuracy assess seven-class conversational emotion recognition \citep{poria2019meld}. Table~\ref{tab:meld} and Appendix~\ref{app:ordinary} report the aggregate and available class-specific results.
\section{Conflict specificity and directional improvement}
\label{app:specificityalgebra}\label{app:specificityreading}\label{app:empiricalreading}
Let $A_{\conf}(M)$ and $A_{\cons}(M)$ denote the conflict and consistent scores of model $M$ under a shared evaluation. Relative to a comparator $B$, specificity has two equivalent readings:
\begin{align}
 S(\text{\method{}},B)
 &=\big[A_{\conf}(\text{\method{}})-A_{\conf}(B)\big]
   -\big[A_{\cons}(\text{\method{}})-A_{\cons}(B)\big] \nonumber\\
 &=\big[A_{\cons}(B)-A_{\conf}(B)\big]
   -\big[A_{\cons}(\text{\method{}})-A_{\conf}(\text{\method{}})\big].
 \label{eq:specificitydeficit}
\end{align}
It measures the excess gain on conflict inputs, or equivalently the contraction of the conflict deficit. Emotion-SFT has a $73.0-60.0=13.0$ pp deficit and VISTA a $74.2-64.5=9.7$ pp deficit, yielding $S=3.3$ pp. Against Generic-CoT-SFT and the learned gate, VISTA's conflict gains remain larger than its consistent gains (Table~\ref{tab:specificityfull}).

\begin{table}[htbp]
\centering\small
\caption{\textbf{CA-MER conflict specificity (pp).} The same comparator is used for both gains. Contrasts are calculated before rounding the displayed accuracies.}
\label{tab:specificityfull}
\begin{tabular}{@{}lrrr@{}}
\toprule
\VISTAHeaderRow
\VISTAFirstHeader{Comparator} & \VISTAHeader{$\Delta_{\conf}$} & \VISTAHeader{$\Delta_{\cons}$} & \VISTALastHeader{$S$}\\
\midrule
Emotion-SFT &4.5&1.2&3.3\\
Generic-CoT-SFT &3.0&0.2&2.8\\
Modality-Gate-SFT &2.5&0.2&2.3\\
\bottomrule
\end{tabular}
\end{table}

\subsection{Improvement in both conflict directions}
Write $\Delta_v=A_v(\method)-A_v(B)$ and $\Delta_a=A_a(\method)-A_a(B)$. Their mean and half-difference are
\begin{align}
 \mu_B&=\frac{\Delta_v+\Delta_a}{2},\qquad
 d_B=\frac{\Delta_v-\Delta_a}{2},\nonumber\\
 (\Delta_v,\Delta_a)&=(\mu_B+d_B,\mu_B-d_B).
 \label{eq:directiondecomposition}
\end{align}
Both directions improve precisely when $\mu_B>|d_B|$. Subtracting the same consistent gain gives
\begin{equation}
 S_v=\Delta_v-\Delta_{\cons},\qquad
 S_a=\Delta_a-\Delta_{\cons},\qquad
 \frac{S_v+S_a}{2}=S.
 \label{eq:directionalspecificity}
\end{equation}
Table~\ref{tab:directionalcontrasts} shows positive gains and positive directional specificity against all three trained controls. Thus the average conflict improvement is shared by both annotation-defined conflict directions.

\begin{table}[htbp]
\centering\small
\caption{\textbf{Directional control contrasts (pp).} Calculated from the displayed subgroup means in Table~\ref{tab:camercomplete}. The final column is the reduction in the audio-minus-video disparity.}
\label{tab:directionalcontrasts}
\begin{tabular}{@{}lrrrrrr@{}}
\toprule
\VISTAHeaderRow
\VISTAFirstHeader{Comparator} & \VISTAHeader{$\Delta_v$} & \VISTAHeader{$\Delta_a$} & \VISTAHeader{$\Delta_{\cons}$} & \VISTAHeader{$S_v$} & \VISTAHeader{$S_a$} & \VISTALastHeader{$2d_B$}\\
\midrule
Emotion-SFT &6.0&3.0&1.2&4.8&1.8&3.0\\
Generic-CoT-SFT &4.0&2.0&0.2&3.8&1.8&2.0\\
Modality-Gate-SFT &3.0&2.0&0.2&2.8&1.8&1.0\\
\bottomrule
\end{tabular}
\end{table}

\paragraph{Specificity under representation interventions.}
The full system has $S=3.3$ pp against Emotion-SFT. Cross-sample shuffling and cutting appraisal's decision connection each give $S=1.5$ pp; same-emotion shuffling gives $S=2.7$ pp. The paired contrast identity
\begin{equation}
 d_{\conf}(V)-d_{\cons}(V)=S(\method,B)-S(V,B),
 \quad d_u(V)=A_u(\method)-A_u(V),
 \label{eq:attenuationreading}
\end{equation}
connects the ablations to the conflict-specific gain. For cross-sample shuffling, the conflict and consistent decreases are $2.5$ and $0.7$ pp, giving a $1.8$ pp loss of specificity. For same-emotion shuffling they are $0.8$ and $0.2$ pp, giving $0.6$ pp. These descriptive contrasts assess how much the observed advantage depends on the supplied appraisal state, without interpreting a share of accuracy gain as a fraction of causally attributable errors.

Specificity is reported together with absolute accuracies because a positive difference in gains alone need not imply improvement. The present comparisons have both. These aggregate scores and contrasts are descriptive; they do not establish statistical significance or formal non-inferiority.
\FloatBarrier
\stopcontents[vistaconcepts]
\resumecontents[vistaevidence]
\section{Complete recognition results and mechanism controls}
\label{app:additional}
\subsection{CA-MER: all subsets and output validity}
\label{app:coverage}
Table~\ref{tab:camercomplete} supplies every subset, aggregate, and output-validity measurement. Conflict Avg gives equal weight to the visual and audio conflict directions. The main-text comparison emphasizes conflict specificity; the full table also reports overall accuracy and output validity for every model.

\begin{table}[htbp]
\centering\small\setlength{\tabcolsep}{4pt}
\caption{\textbf{Complete CA-MER results (\%).} All five core models use Qwen2.5-Omni-7B and the same normalized output interface. Conflict Avg is $(A_v+A_a)/2$.}
\label{tab:camercomplete}\label{tab:coverage}
\begin{tabular}{@{}lrrrrrr@{}}
\toprule
\VISTAHeaderRow
\VISTAFirstHeader{Method}&\VISTAHeader{Visual}&\VISTAHeader{Audio}&\VISTAHeader{Conflict Avg}&\VISTAHeader{Consistent}&\VISTAHeader{Overall}&\VISTALastHeader{Invalid}\\
\midrule
Base&49.0&60.0&54.5&68.0&59.0&0.5\\
Emotion-SFT&55.0&65.0&60.0&73.0&64.3&0.2\\
Generic-CoT-SFT&57.0&66.0&61.5&74.0&65.7&0.4\\
Modality-Gate-SFT&58.0&66.0&62.0&74.0&66.0&0.4\\
\VISTAFocusRow
\VISTAFirstFocus{\method{}}&61.0&68.0&64.5&74.2&67.7&\VISTALastFocus 0.7\\
\bottomrule
\end{tabular}
\end{table}

\begin{table}[htbp]
\centering\small
\caption{\textbf{External methods under the normalized CA-MER interface.} These reevaluations are separate from the methods' originally published open-vocabulary scores. Differences are in pp.}
\label{tab:externalcamer}
\begin{tabular}{@{}lrr@{}}
\toprule
\VISTAHeaderRow
\VISTAFirstHeader{Method}&\VISTAHeader{Conflict Avg (\%)}&\VISTALastHeader{VISTA gain}\\
\midrule
MoSEAR&61.5&3.0\\
CHASE&61.0&3.5\\
\VISTAFocusRow
\VISTAFirstFocus{\method{}}&64.5&\VISTALastFocus ---\\
\bottomrule
\end{tabular}
\end{table}

\subsection{EmoMM: conflict, missingness, and their intersection}
VISTA transfers to EmoMM without additional adaptation. Relative to Generic-CoT-SFT, its gains are $4.0$ pp under conflict and $4.9$ pp under conflict plus missingness. The consistent-to-conflict decrease is $4.3$ pp for VISTA, compared with $7.5$, $7.5$, and $6.6$ pp for Emotion-SFT, Generic-CoT-SFT, and Modality-Gate-SFT. Table~\ref{tab:emomm} also includes CHASE's protocol reevaluation. Its original router used EmoMM supervision; it attains higher accuracy than VISTA under missingness ($49.3$ versus $46.8$) and combined conflict and missingness ($44.5$ versus $43.5$). This comparison distinguishes conflict transfer from the additional challenge of recovering absent evidence.

\begin{table}[htbp]
\centering\small\setlength{\tabcolsep}{4.2pt}
\caption{\textbf{Complete EmoMM accuracy (\%).} The five core models use frozen transfer. CHASE is a separate protocol reevaluation with prior EmoMM supervision. Macro Avg weights conditions equally; overlapping missingness conditions prevent interpreting it as full-sample overall accuracy.}
\label{tab:emomm}
\begin{tabular}{@{}lrrrrr@{}}
\toprule
\VISTAHeaderRow
\VISTAFirstHeader{Method}&\VISTAHeader{Consistent}&\VISTAHeader{Conflict}&\VISTAHeader{Missing}&\VISTAHeader{Conflict + missing}&\VISTALastHeader{Macro Avg}\\
\midrule
Base&52.1&46.5&43.3&38.0&45.0\\
Emotion-SFT&55.0&47.5&44.0&38.0&46.1\\
Generic-CoT-SFT&55.5&48.0&44.2&38.6&46.6\\
Modality-Gate-SFT&55.6&49.0&45.1&40.0&47.4\\
\VISTAFocusRow
\VISTAFirstFocus{\method{}}&56.3&52.0&46.8&43.5&\VISTALastFocus 49.7\\
\midrule
CHASE (reevaluated)&53.2&51.3&49.3&44.5&49.6\\
\bottomrule
\end{tabular}
\end{table}

\subsection{CH-SIMS v2.0: increasing conflict intensity}
The adapted VISTA model gains $0.2$, $0.7$, $2.0$, and $4.5$ pp over Emotion-SFT from Q1 to Q4. The highest-conflict group's MAE falls from $0.365$ to $0.315$. Table~\ref{tab:chsims} includes zero-shot Base, the complete quartile profile, overall binary accuracy, and Q4 regression error.

\begin{table}[htbp]
\centering\small\setlength{\tabcolsep}{4.8pt}
\caption{\textbf{CH-SIMS v2.0 by conflict strength.} Q1--Q4 use increasing conflict intensity with boundaries set on training data. Acc2 is binary accuracy in percent; Q4 MAE retains its original scale. Base is zero-shot; the four trained methods receive task adaptation.}
\label{tab:chsims}
\begin{tabular}{@{}lrrrrrr@{}}
\toprule
\VISTAHeaderRow
\VISTAFirstHeader{Method}&\VISTAHeader{Q1}&\VISTAHeader{Q2}&\VISTAHeader{Q3}&\VISTAHeader{Q4}&\VISTAHeader{Overall Acc2}&\VISTALastHeader{Q4 MAE $\downarrow$}\\
\midrule
Base&87.5&82.5&77.5&70.0&79.4&0.460\\
Emotion-SFT&90.0&86.5&82.5&77.0&84.0&0.365\\
Generic-CoT-SFT&89.8&86.7&82.8&77.8&84.3&0.355\\
Modality-Gate-SFT&89.7&86.8&83.5&79.2&84.8&0.337\\
\VISTAFocusRow
\VISTAFirstFocus{\method{}}&90.2&87.2&84.5&81.5&85.9&\VISTALastFocus 0.315\\
\bottomrule
\end{tabular}
\end{table}

\paragraph{A common trend under all three controls.}
Define $g_q(B)=A_q(\method)-A_q(B)$. The endpoint contrast and the descriptive least-squares slope across ordered group indices are
\begin{equation}
 E_B=g_4(B)-g_1(B),\qquad
 \beta_B=\frac{\sum_{q=1}^4(q-2.5)g_q(B)}{5}.
 \label{eq:quartilegaintrend}
\end{equation}
All three comparisons have their largest gain in Q4 and a positive endpoint contrast (Table~\ref{tab:quartilecontrasts}); the gain over Emotion-SFT increases at every group. Group indices express severity order, not equal measured increments of conflict; these slopes summarize the four aggregate gains rather than an individual-level causal response.

\begin{table}[htbp]
\centering\small
\caption{\textbf{CH-SIMS gains calculated from Table~\ref{tab:chsims}.} Gains and endpoint contrasts are in pp; slopes are in pp per group index.}
\label{tab:quartilecontrasts}
\begin{tabular}{@{}lrrrrrr@{}}
\toprule
\VISTAHeaderRow
\VISTAFirstHeader{Comparator}&\VISTAHeader{$g_1$}&\VISTAHeader{$g_2$}&\VISTAHeader{$g_3$}&\VISTAHeader{$g_4$}&\VISTAHeader{$E_B$}&\VISTALastHeader{$\beta_B$}\\
\midrule
Emotion-SFT&0.2&0.7&2.0&4.5&4.3&1.42\\
Generic-CoT-SFT&0.4&0.5&1.7&3.7&3.3&1.11\\
Modality-Gate-SFT&0.5&0.4&1.0&2.3&1.8&0.60\\
\bottomrule
\end{tabular}
\end{table}

\subsection{The complete appraisal and training ablations}
Table~\ref{tab:fullablations} separates the content of appraisal, its connection to the decision, and its supervision. Cross-sample shuffling reduces conflict accuracy by $2.5$ pp, whereas shuffling within the same emotion class reduces it by $0.8$ pp. Randomizing field names changes the score by $0.3$ pp; replacing the representation with a generic semantic bottleneck changes it by $1.3$ pp. The combination favors informative scene-specific content and decision access over the names of the fields alone.

\begin{table}[htbp]
\centering\small\setlength{\tabcolsep}{4.2pt}
\caption{\textbf{Complete CA-MER mechanism controls.} Accuracy is in percent; changes and specificity are in pp. $S$ is always relative to Emotion-SFT. ``Auxiliary only'' retains appraisal supervision but cuts its decision connection. The $\lambda_{\mathrm{cf}}=0$ row repeats the full configuration.}
\label{tab:fullablations}
\begin{tabularx}{\linewidth}{@{}Yrrrr@{}}
\toprule
\VISTAHeaderRow
\VISTAFirstHeader{Condition}&\VISTAHeader{Conflict}&\VISTAHeader{Consistent}&\VISTAHeader{$\Delta_{\conf}$}&\VISTALastHeader{$S$}\\
\midrule
\VISTAFocusRow
\VISTAFirstFocus{Full \method{}}&64.5&74.2&0.0&\VISTALastFocus 3.3\\
Within-emotion appraisal shuffle&63.7&74.0&$-0.8$&2.7\\
Cross-sample appraisal shuffle&62.0&73.5&$-2.5$&1.5\\
Field-name randomization&64.2&74.2&$-0.3$&3.0\\
Auxiliary only; decision connection cut&62.5&74.0&$-2.0$&1.5\\
No expression regulation $r$&63.6&74.0&$-0.9$&2.6\\
Generic semantic bottleneck&63.2&74.1&$-1.3$&2.1\\
External explanation changed; internal $z$ fixed&64.5&74.2&0.0&3.3\\
Internal $z$ corrupted; external explanation retained&62.0&73.5&$-2.5$&1.5\\
No appraisal supervision ($\lambda_z=0$)&63.2&74.0&$-1.3$&2.2\\
No conflict supervision ($\lambda_c=0$)&63.5&74.1&$-1.0$&2.4\\
No reliability supervision ($\lambda_a=0$)&63.0&74.0&$-1.5$&2.0\\
$\lambda_{\mathrm{cf}}=0$ (full configuration)&64.5&74.2&0.0&3.3\\
\bottomrule
\end{tabularx}
\end{table}

Changing the external explanation while holding the internal state fixed leaves the reported accuracies unchanged. Corrupting the internal appraisal while retaining the explanation reproduces the cross-sample-shuffle scores. VISTA uses $\lambda_{\mathrm{cf}}=0$: its counterfactual evaluation is therefore a behavioral test, not an ablation of an active counterfactual training loss. Removing appraisal, conflict, and reliability supervision gives conflict reductions of $1.3$, $1.0$, and $1.5$ pp, respectively.

\paragraph{Native modality weights.}
VISTA's visual weight exceeds its audio weight on visual-aligned conflict, and the ordering reverses on audio-aligned conflict (Table~\ref{tab:nativeweights}). The gate's corresponding dominant weights are $0.37$ and $0.43$. These subset means describe conditional weighting behavior. They are distinct from the shared masking diagnostic $\alpha_{\mathrm{diag}}$ used for the interventions in Appendix~\ref{app:counterfactualreading}.

\begin{table}[htbp]
\centering\small
\caption{\textbf{Reported native modality weights $\alpha$.} Dashes mark gate weights for which no separate value is available; these entries are not inferred from normalization. These are weights on their original scale.}
\label{tab:nativeweights}
\begin{tabular}{@{}llrr@{}}
\toprule
\VISTAHeaderRow
\VISTAFirstHeader{Method}&\VISTAHeader{Conflict subset}&\VISTAHeader{Visual weight}&\VISTALastHeader{Audio weight}\\
\midrule
Modality-Gate-SFT&Visual-aligned&0.37&---\\
Modality-Gate-SFT&Audio-aligned&---&0.43\\
VISTA&Visual-aligned&0.42&0.30\\
VISTA&Audio-aligned&0.25&0.47\\
\bottomrule
\end{tabular}
\end{table}

\subsection{Frozen-backbone prompts and explanation quality}
All four prompting conditions hold the backbone frozen. Appraisal prompting improves Conflict Avg from $54.5$ to $56.0\%$ relative to direct prediction, while generic conflict CoT reaches $56.4\%$. Appraisal prompting also produces the longest outputs and the largest invalid-output rate (Table~\ref{tab:frozenmain}). Thus the trained system's improvement is not explained by a superior appraisal prompt alone.

\begin{table}[!ht]
\centering\small
\caption{\textbf{Complete frozen-model prompt comparison.} Recognition and invalid-output rates are percentages; output length is the mean token count.}
\label{tab:frozenmain}
\begin{tabular}{@{}lrrrr@{}}
\toprule
\VISTAHeaderRow
\VISTAFirstHeader{Prompt}&\VISTAHeader{Conflict Avg}&\VISTAHeader{Consistent}&\VISTAHeader{Invalid}&\VISTALastHeader{Output tokens}\\
\midrule
Direct&54.5&68.0&0.5&8\\
Modality decomposition&55.8&69.0&0.8&210\\
Generic conflict CoT&56.4&69.0&1.0&290\\
Appraisal prompt&56.0&68.5&1.8&440\\
\bottomrule
\end{tabular}
\end{table}

\begin{figure}[htbp]
\centering
\includegraphics[width=\linewidth]{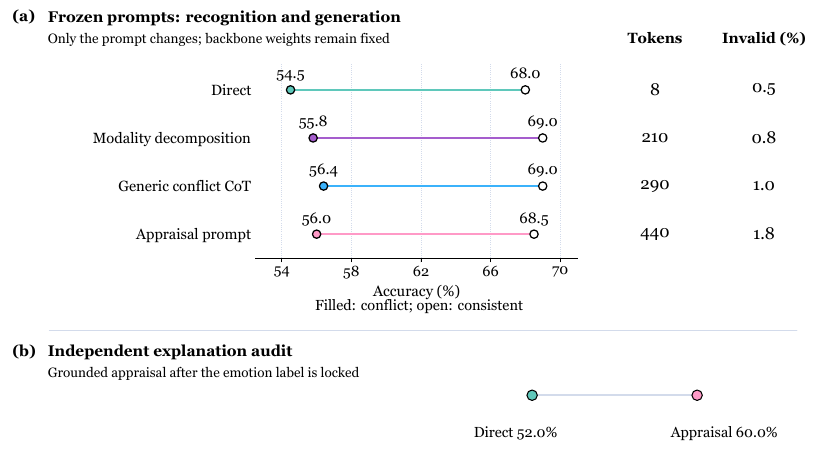}
\caption{\textbf{What prompting supplies before training.} (a) Conflict and consistent accuracy of four frozen-backbone prompts, with output length and invalid-output rates. (b) Grounded-appraisal rates of $52.0\%$ for Direct and $60.0\%$ for Appraisal prompt in a separate explanation audit. That audit locks the predicted label before eliciting the second-pass explanation.}
\label{fig:score_geometry}
\end{figure}

The independent second-pass audit evaluates explanations after the emotion label has been locked. Grounded-appraisal rates are $52.0\%$ for Direct and $60.0\%$ for Appraisal prompt. This measures explanation quality under the audit protocol, separately from the first-pass recognition scores. More grounded explanations do not by themselves imply that the corresponding appraisal prompt has higher conflict accuracy than every alternative.
\FloatBarrier
\section{Counterfactual response and irrelevant rewrites}
\label{app:counterfactualreading}
\subsection{Two paired tests with different success criteria}
\label{app:cfreconciliation}
The evaluation contains 150 valid intervention pairs and a separate set of 150 irrelevant-rewrite pairs. A valid intervention changes information relevant to the affective interpretation; its direction score asks whether the target-category probability moves in the annotated direction. An irrelevant rewrite tests whether the predicted label remains unchanged. These criteria distinguish sensitivity to a meaningful change from stability under a semantically irrelevant one. The paired-input test changes $X$ and can therefore change both evidence $H$ and appraisal $z$; a state-only replacement with fixed $X$ is a separate intervention.

For a fixed set of paired predictions with target category $y_i^\star$, paired inputs $(X_i,X'_i)$, and annotated direction $d_i\in\{-1,+1\}$, the direction criterion is
\begin{equation}
 \mathrm{Dir}=\frac{1}{N}\sum_{i=1}^{N}
 \mathbf 1\!\left\{d_i\left[p(y_i^\star\mid X'_i)-p(y_i^\star\mid X_i)\right]>0\right\}.
 \label{eq:cfdirscore}
\end{equation}
The target category and direction are fixed before either prediction is scored. A zero directional change is a failure, and invalid model outputs remain failures in the fixed direction-score denominator. A label can remain the most likely class while its probability moves in the appropriate direction. The valid-intervention label-change rate,
\begin{equation}
 \mathrm{Change}=\frac1N\sum_i\mathbf 1\{\hat y(X'_i)\neq\hat y(X_i)\},
 \label{eq:cflabelchange}
\end{equation}
is therefore a separate measurement, not a substitute for direction compliance. Irrelevant-rewrite stability uses the complementary equality criterion on its own pair set. The signed probability difference averages $d_i[p(y_i^\star\mid X_i')-p(y_i^\star\mid X_i)]$. The signed log-odds difference uses the same target and direction, with probabilities clipped to $[10^{-6},1-10^{-6}]$ for numerical stability. These continuous summaries use pairs with defined probabilities; their valid count is distinct from the fixed binary-score denominator.

Tables~\ref{tab:counterfactual} and~\ref{tab:counterfactualfields} retain the source-reported aggregate precision. These aggregate percentages are not converted into integer per-run counts or reconstructed uncertainty estimates.

\begin{table}[htbp]
\centering\small\setlength{\tabcolsep}{4pt}
\caption{\textbf{Complete paired intervention results.} Direction, label change, and rewrite stability are percentages. Probability and log-odds differences retain their original scales. Valid interventions and irrelevant rewrites each contain 150 pairs.}
\label{tab:counterfactual}
\begin{tabular}{@{}lrrrrr@{}}
\toprule
\VISTAHeaderRow
\VISTAFirstHeader{Method}&\VISTAHeader{Direction}&\VISTAHeader{Label change}&\VISTAHeader{$\Delta p$}&\VISTAHeader{$\Delta$ log-odds}&\VISTALastHeader{Rewrite stability}\\
\midrule
Emotion-SFT&62.0&24.0&0.050&0.201&91.0\\
Generic-CoT-SFT&66.0&28.0&0.070&0.281&90.0\\
Modality-Gate-SFT&63.0&25.0&0.055&0.221&91.0\\
\VISTAFocusRow
\VISTAFirstFocus{\method{}}&75.0&36.0&0.120&0.484&\VISTALastFocus 92.0\\
\bottomrule
\end{tabular}
\end{table}

The reported paired measurements combine sensitivity to meaningful changes with stability under irrelevant rewrites. VISTA's target-probability and log-odds differences are $0.120$ and $0.484$, compared with $0.070$ and $0.281$ for Generic-CoT-SFT; its valid-intervention label-change rate is $36.0\%$ and irrelevant-rewrite stability is $92.0\%$. The final training configuration does not use a counterfactual loss ($\lambda_{\mathrm{cf}}=0$).

\subsection{Intervention dimensions and diagnostic weighting}
The six intervention types are goal/concern ($g$), goal congruence ($c$), Expectation ($e$), coping/control ($k$), norm/social relevance ($n$), and expression regulation ($r$), each containing 25 pairs. Table~\ref{tab:counterfactualfields} gives the four available type-specific direction-compliance rates; goal/concern and Expectation have no separately reported rates. This field set differs from the six-field mutual-information diagnostic, which includes agency and excludes free-text goal/concern.

\begin{table}[htbp]
\centering\small
\caption{\textbf{Available VISTA intervention-type results.} Each of the six types has 25 pairs; type-specific percentages are available for the four listed types. The remaining two types contribute to the 150-pair aggregate but have no separately reported rates.}
\label{tab:counterfactualfields}
\begin{tabular}{@{}lrr@{}}
\toprule
\VISTAHeaderRow
\VISTAFirstHeader{Intervention type}&\VISTAHeader{Pairs}&\VISTALastHeader{Direction compliance (\%)}\\
\midrule
Goal congruence&25&84.0\\
Expression regulation&25&81.0\\
Coping / control&25&68.0\\
Social norm&25&69.0\\
\bottomrule
\end{tabular}
\end{table}

\paragraph{A common masking diagnostic.}
Fix the original input's predicted class $y_0=\hat y(X)$ and measure the positive probability drop after masking modality $m$:
\begin{equation}
 \delta_m=\max\{0,p(y_0\mid X)-p(y_0\mid X_{\setminus m})\},\qquad
 \alpha_{\mathrm{diag},m}=\frac{\delta_m+0.01}{\sum_{j\in\{V,A,T\}}\delta_j+0.03}.
 \label{eq:maskingdiagnostic}
\end{equation}
Video masking retains the audio track; audio masking retains text; text masking removes both the transcript and dialogue history. Each masked input is evaluated by a fresh forward pass. With all three modalities included, zero drops yield uniform diagnostic weights of $1/3$. For a diagnostic over $M$ available modalities, the smoothing total is $0.01M$.

The target-modality change in this diagnostic is $+0.040$ for VISTA, compared with the reported control range of $+0.015$--$+0.020$. These diagnostic weights are distinct from the model-native $\alpha$ in Table~\ref{tab:nativeweights}: they summarize output changes across fresh masked-input evaluations and can reflect the fusion residual, the arbitration branch, and their interactions. They describe perturbation response without assigning the full prediction's causal contribution to an individual branch.
\FloatBarrier
\Needspace{10\baselineskip}
\section{Appraisal representation and downstream prediction}
\label{app:learnabilityuse}\label{app:theradia}
\subsection{A shared linear probe on frozen representations}
THERADIA supplies human appraisal annotations for four dimensions. Every core model freezes its backbone and trains the same $\mathrm{Linear}(3584,4)+\mathrm{sigmoid}$ probe. The probe has \mbox{$3584\times4+4=14{,}340$} parameters, so probe capacity is fixed across the five models. The evaluation asks how well those appraisal judgments can be read from each representation, using the concordance correlation coefficient (CCC).

The four targets are novelty, intrinsic pleasantness, goal conduciveness, and coping. Features are standardized with training-set statistics alone. Probe training minimizes mean squared error with learning rate $5\times10^{-4}$, batch size 64, at most 50 epochs, and early-stopping patience 5; the checkpoint is selected by development-set macro CCC. The backbone remains frozen throughout this probe fit.

\begin{table}[htbp]
\centering\small\setlength{\tabcolsep}{4.5pt}
\caption{\textbf{THERADIA appraisal readout: CCC by dimension.} All models use the same four-output linear probe on a frozen backbone. Macro CCC averages the four dimensions.}
\label{tab:theradiafields}
\begin{tabular}{@{}lrrrrr@{}}
\toprule
\VISTAHeaderRow
\VISTAFirstHeader{Method}&\VISTAHeader{Novelty}&\VISTAHeader{Pleasantness}&\VISTAHeader{Goal}&\VISTAHeader{Coping}&\VISTALastHeader{Macro CCC}\\
\midrule
Base&0.400&0.470&0.490&0.520&0.470\\
Emotion-SFT&0.440&0.500&0.530&0.550&0.505\\
Generic-CoT-SFT&0.470&0.530&0.560&0.570&0.533\\
Modality-Gate-SFT&0.470&0.530&0.560&0.580&0.535\\
\VISTAFocusRow
\VISTAFirstFocus{\method{}}&0.540&0.590&0.620&0.650&\VISTALastFocus 0.600\\
\bottomrule
\end{tabular}
\end{table}

VISTA raises macro CCC by $0.095$ over Emotion-SFT and improves all four dimensions. Its aggregate MAE, RMSE, Pearson correlation, and Spearman correlation are $0.086$, $0.111$, $0.658$, and $0.625$, respectively (Table~\ref{tab:theradiaaggregate}). Novelty and Expectation remain different constructs: the four-dimensional probe evaluates the available human dimensions, not complete seven-field correctness.

\begin{table}[htbp]
\centering\small
\caption{\textbf{Aggregate VISTA appraisal-probe metrics on THERADIA.} All quantities retain their native numerical scale.}
\label{tab:theradiaaggregate}
\begin{tabular}{@{}rrrrr@{}}
\toprule
\VISTAHeaderRow
\VISTAFirstHeader{Macro CCC $\uparrow$}&\VISTAHeader{MAE $\downarrow$}&\VISTAHeader{RMSE $\downarrow$}&\VISTAHeader{Pearson $\uparrow$}&\VISTALastHeader{Spearman $\uparrow$}\\
\midrule
0.600&0.086&0.111&0.658&0.625\\
\bottomrule
\end{tabular}
\end{table}

\begin{figure}[htbp]
\centering
\includegraphics[width=\linewidth]{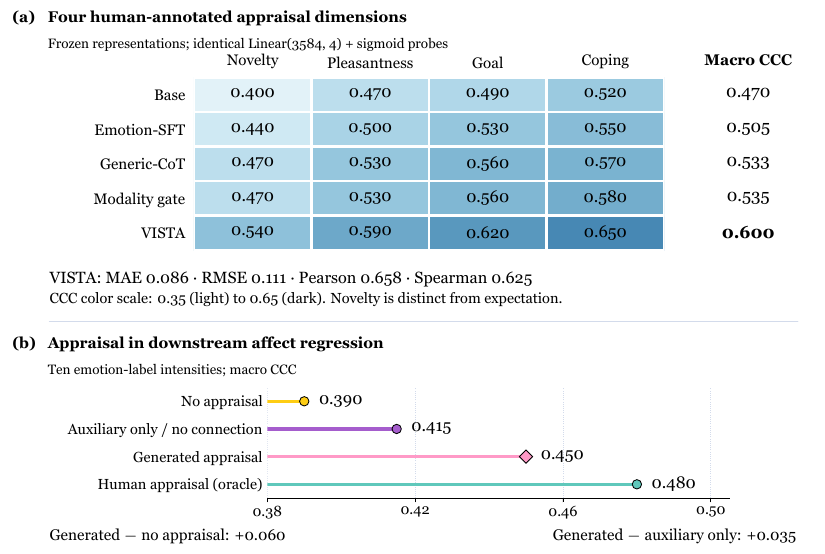}
\caption{\textbf{From readable appraisal to useful appraisal.} (a) Five frozen backbone representations under a common four-dimensional linear probe, evaluated by dimension-wise and macro CCC. (b) Downstream regression of ten emotion intensities under four appraisal conditions. Model-generated appraisal gives $0.450$ macro CCC, versus $0.415$ with auxiliary supervision alone; human appraisal is a privileged test-time reference.}
\label{fig:appraisal_quality}
\end{figure}

\subsection{Using appraisal to predict ten emotion intensities}
The downstream task predicts intensities for ten named emotions: annoyed, anxious, confident, desperate, frustrated, happy, interested, relaxed, satisfied, and surprised. These are the THERADIA core emotion labels, rather than general core-affect axes. Auxiliary appraisal supervision alone gives $0.415$ macro CCC, compared with $0.390$ without appraisal. Connecting model-generated appraisal to the decision raises CCC to $0.450$, a further $0.035$ improvement over auxiliary-only training and $0.060$ over no appraisal. Human appraisal supplies privileged information and reaches $0.480$ (Table~\ref{tab:theradiaemotion}). The four-output probe and ten-output downstream task have distinct targets; the probe's architecture and fitting schedule describe the appraisal-readout experiment.

\begin{table}[htbp]
\centering\small
\caption{\textbf{THERADIA downstream emotion-intensity regression.} The task predicts ten named emotion intensities; the metric is macro CCC. Human appraisal is an oracle condition, separate from inference using model-generated appraisal.}
\label{tab:theradiaemotion}
\begin{tabular}{@{}lrr@{}}
\toprule
\VISTAHeaderRow
\VISTAFirstHeader{Appraisal condition}&\VISTAHeader{Macro CCC}&\VISTALastHeader{Gain over none}\\
\midrule
No appraisal&0.390&0.000\\
Auxiliary only; decision connection cut&0.415&0.025\\
\VISTAFocusRow
\VISTAFirstFocus{Model-generated appraisal}&0.450&\VISTALastFocus 0.060\\
Human appraisal (oracle)&0.480&0.090\\
\bottomrule
\end{tabular}
\end{table}

The probe and downstream experiment serve different purposes. The former measures the readability of human appraisal dimensions at matched probe capacity; the latter tests the usefulness of an appraisal input to affect prediction. Together they support a representation that is both aligned with the available appraisal labels and useful when connected to the downstream predictor. The remaining $0.030$ CCC gap quantifies the distance from generated appraisal to the privileged human-appraisal condition.
\FloatBarrier
\section{Supervision construction, field quality, and label visibility}
\label{app:supervisionreading}\label{app:annotation}
\subsection{Teacher generation and field masks}
One Qwen2.5-Omni-7B teacher generates three samples per input with temperature $0.7$, top-$p$ $0.9$, a maximum of 512 output tokens, and seeds 101, 102, and 103. These repeated samples measure sampling consistency within the same teacher. The fixed template has SHA256
\begin{center}
\footnotesize\ttfamily
c2812f36f754d70875bd69c25534eab6\\
97d57c9f88c34d486b4d6d74c432caa3
\end{center}
Each field stores its value, confidence, evidence, and validity mask separately. Missing values remain null; confidence does not automatically become a supervision weight. Agency value \texttt{unknown} can appear in an output but is masked for supervision. The four binary social-relevance components---politeness, identity, status, and relationship---each have their own mask. A component receives \texttt{false} when evidence supports a negative judgment; absent evidence leaves it missing. This representation supports partial supervision at the level of individual fields and social-relevance components.

\subsection{From candidate pool to retained training set}
The candidate pool contains 10,256 examples. The final common training set retains 8,000: 2,123 from CH-SIMS v2.0, 866 from THERADIA, and 5,011 from MELD. The retention process yields 2,872 automatically accepted examples, 1,231 accepted after human adjudication, and 3,897 retained with partial-field masking; 2,256 candidates are rejected (Table~\ref{tab:retention}). These counts describe the complete candidate disposition. The separate 400-example field audit below is not the count of all human-adjudicated examples.

\begin{table}[htbp]
\centering\small
\caption{\textbf{Complete pseudo-appraisal data accounting.} The left block partitions all candidates; the right block partitions the retained training set by source. Partial masking retains an example while masking unsupported fields.}
\label{tab:retention}
\begin{tabular}{@{}lrlr@{}}
\toprule
\VISTAHeaderRow
\VISTAFirstHeader{Candidate disposition}&\VISTAHeader{Count}&\VISTAHeader{Retained source}&\VISTALastHeader{Count}\\
\midrule
Automatically accepted&2,872&CH-SIMS v2.0&2,123\\
Accepted after human adjudication&1,231&THERADIA&866\\
Retained with partial-field masking&3,897&MELD&5,011\\
Rejected&2,256&&\\
\midrule
All candidates&10,256&All retained&8,000\\
\bottomrule
\end{tabular}
\end{table}

\subsection{Seven-field audit on 400 candidates}
The audit samples 106 CH-SIMS, 43 THERADIA, and 251 MELD candidates, covering conflict, ambiguity, and consistency strata within each source. Two raters independently assess the candidates, with method names and teacher conditions blinded; a third rater adjudicates disagreements. Human agreement refers to the two original judgments before adjudication, whereas teacher sampling consistency concerns the three generated samples.

Coverage uses all 400 audited candidates as its denominator: for example, goal congruence is valid for 296 examples, giving $296/400=74.0\%$. Fields can coexist within a candidate, so their valid counts are not additive. Sampling consistency uses the candidates with sufficiently many valid samples for that field as its denominator. For goal text $g$, semantic comparison uses a frozen goal/concern categorization. Table~\ref{tab:annotation} summarizes quality at the field level. The social-norm row records 136 available fields out of 400 candidates ($34.0\%$ coverage); each of its four binary components retains its own validity mask, so this aggregate is distinct from component-specific coverage.

Grounding judgments have three outcomes: \emph{yes} for explicit input support, \emph{no} for unsupported or contradicted content, and \emph{uncertain} when the evidence does not settle the judgment. The grounded proportion is the number of \emph{yes} judgments divided by all non-missing generated fields; \emph{uncertain} remains in this denominator and is a separate judgment category. Agency and goal congruence have grounded proportions of $84.0\%$ and $81.0\%$; expectation, social norm, and expression regulation have coverage of $43.0\%$, $34.0\%$, and $43.0\%$. The schema therefore combines broadly supported event judgments with more selectively available contextual ones.

\begin{table}[htbp]
\centering\small\setlength{\tabcolsep}{5pt}
\caption{\textbf{Seven-field audit.} Coverage is the valid count divided by 400 candidates. Sampling consistency and grounded proportion are percentages with their field-specific denominators. Multiple fields may be valid in the same example; $n$ retains the reported field-level aggregate.}
\label{tab:annotation}
\begin{tabularx}{\linewidth}{@{}Yrrrr@{}}
\toprule
\VISTAHeaderRow
\VISTAFirstHeader{Field}&\VISTAHeader{Valid $N$}&\VISTAHeader{Coverage}&\VISTAHeader{\shortstack{Sampling\\consistency}}&\VISTALastHeader{Grounded}\\
\midrule
Goal / concern $g$&240&60.0&80.0&76.0\\
Goal congruence $c$&296&74.0&85.0&81.0\\
Expectation $e$&172&43.0&76.0&67.0\\
Agency $a$&260&65.0&86.0&84.0\\
Coping / control $k$&192&48.0&81.0&73.0\\
Social norm $n$&136&34.0&78.0&68.0\\
Expression regulation $r$&172&43.0&80.0&72.0\\
\bottomrule
\end{tabularx}
\end{table}

Coverage and utility are distinct. Expression regulation is valid in $43.0\%$ of audited candidates, while removing the field reduces conflict accuracy by $0.9$ pp in the mechanism comparison. Partial-field masking preserves examples without forcing every field to be inferred from insufficient evidence.

\begin{figure}[htbp]
\centering
\includegraphics[width=\linewidth]{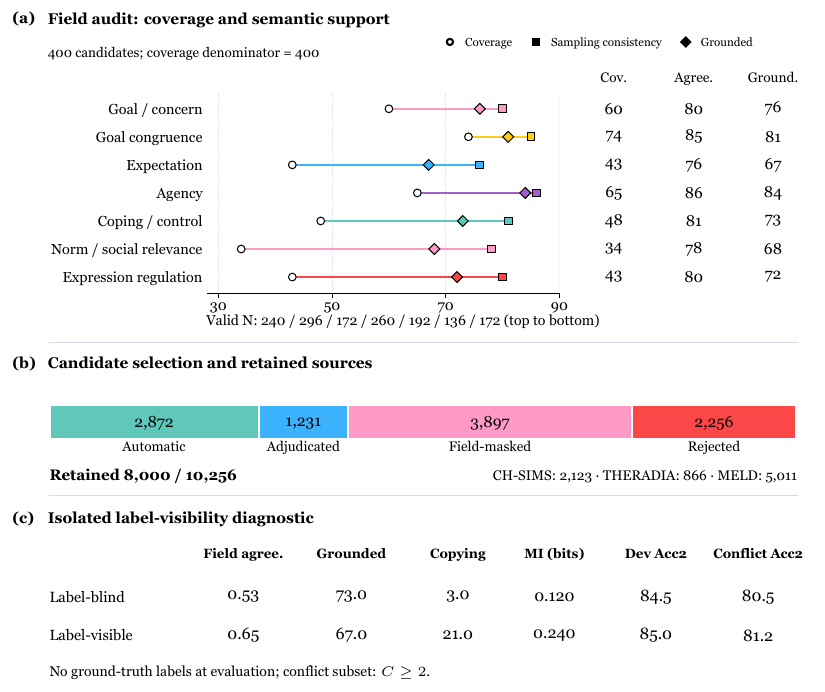}
\caption{\textbf{Supervision from selection to decision quality.} (a) Coverage, sampling consistency, grounded proportion, and valid count for all seven fields in the 400-candidate audit. (b) Disposition of all 10,256 candidates and dataset sources of the 8,000 retained examples. (c) Label-blind versus label-visible teacher diagnostics, with the latter trained in an isolated copy. Percentages, field-consistency scores, and mutual information retain their distinct units.}
\label{fig:supervision}
\end{figure}

\subsection{An isolated label-visibility diagnostic}
The main pipeline generates appraisal without revealing the true emotion label to the teacher. A label-visible comparison uses an isolated training copy. Both are evaluated on CH-SIMS development examples without supplying true labels at evaluation, and the conflict subset uses $C\geq2$.

Label-copy rate counts field outputs containing a predefined target-label term or synonym, divided by all auditable outputs; quotations from the input evidence are recorded separately. Grounding, copying, and sampling consistency describe complementary properties of generated supervision. Human agreement is computed from the raters' original decisions and is distinct from the field-consistency score reported in Table~\ref{tab:labelvisibility}.

The mutual-information diagnostic uses $\mathcal F=\{c,e,a,k,n,r\}$ and the CH-SIMS binary sentiment label $Y$. Goal text $g$ is excluded. Coping $k$ is discretized into five fixed equal-width bins, $n$ is encoded as a four-bit combination, and missing states are represented separately; agency \texttt{unknown} is not a valid attribution category. Each field--label contingency table uses Dirichlet $0.5$ smoothing, with information measured in bits. For each field, 1,000 label permutations estimate the finite-sample bias:
\begin{equation}
 I_f^{\mathrm{corr}}=\max\!\left\{0,\;I(Z_f;Y)-\frac{1}{1000}\sum_{b=1}^{1000}I(Z_f;Y^{(b)})\right\},
 \qquad I_{\mathrm{app}}=\frac{1}{6}\sum_{f\in\mathcal F}I_f^{\mathrm{corr}}.
 \label{eq:appraisalmi}
\end{equation}
Thus the reported MI is an equal-weight mean of six bias-corrected field scores.

\begin{table}[htbp]
\centering\small\setlength{\tabcolsep}{3.8pt}
\caption{\textbf{Label-visibility diagnostic.} Field consistency retains the reported score scale; grounded proportion, copying, and Acc2 are percentages. MI is the mean bias-corrected field--label mutual information in bits (Equation~\ref{eq:appraisalmi}). Both development evaluations hide true labels.}
\label{tab:labelvisibility}
\begin{tabular}{@{}lrrrrrr@{}}
\toprule
\VISTAHeaderRow
\VISTAFirstHeader{Teacher condition}&\VISTAHeader{Consistency}&\VISTAHeader{Grounded}&\VISTAHeader{Copying}&\VISTAHeader{MI (bits)}&\VISTAHeader{Dev Acc2}&\VISTALastHeader{Conflict Acc2}\\
\midrule
Label-blind&0.53&73.0&3.0&0.120&84.5&80.5\\
Label-visible, isolated&0.65&67.0&21.0&0.240&85.0&81.2\\
\bottomrule
\end{tabular}
\end{table}

Making the teacher label-aware increases field consistency from $0.53$ to $0.65$ and appraisal--label MI from $0.120$ to $0.240$ bits, while copying rises from $3.0\%$ to $21.0\%$ and grounding falls from $73.0\%$ to $67.0\%$. Development accuracy increases by $0.5$ pp overall and $0.7$ pp on the conflict subset. Higher label association and higher scene grounding are consequently different properties. The observed MI increase alone neither establishes source leakage nor replaces a grounding assessment; the isolated label-visible condition is a diagnostic of supervision design.

\subsection{Second-pass explanation audit}
A separate pilot audit uses 150 examples, with 50 examples in each of its three strata. The first prediction is locked before a second-pass explanation is generated. The grounded-appraisal rates are $52.0\%$ for Direct and $60.0\%$ for Appraisal prompt (Figure~\ref{fig:score_geometry}). This protocol measures explanation quality after the recognition decision; the second pass does not update that decision.
\FloatBarrier
\section{Computation and the structure of remaining errors}
\label{app:costerrorreading}\label{app:cost}
\subsection{Training and inference resources}
Table~\ref{tab:costfull} compares training and inference resources for the four methods. The four methods share 8,000 training examples, three epochs, 375 optimization steps, effective batch size 64, LoRA rank 16, and seeds 42, 43, and 44. Their auxiliary heads yield different trainable parameter counts. Generic-CoT-SFT and VISTA share a mean target length of 360 tokens, allowing computation to be compared at matched average output length.

\begin{table}[htbp]
\centering\small\setlength{\tabcolsep}{3.8pt}
\caption{\textbf{Complete resource accounting.} Training columns concern the common stage; inference is measured on one A100 at batch size 1. GPU-hours are per seed and peak memory is GiB per GPU. P50 and P95 are inference latency percentiles in seconds.}
\label{tab:costfull}
\begin{tabular}{@{}lrrrrrr@{}}
\toprule
\VISTAHeaderRow
\VISTAFirstHeader{Method}&\VISTAHeader{Params (M)}&\VISTAHeader{GPU-h}&\VISTAHeader{Peak GiB}&\VISTAHeader{P50 (s)}&\VISTAHeader{P95 (s)}&\VISTALastHeader{Tokens}\\
\midrule
Emotion-SFT&10.12&32.0&42.0&1.2&4.0&8\\
Generic-CoT-SFT&10.12&40.0&45.0&13.5&26.0&360\\
Modality-Gate-SFT&12.19&64.0&52.0&6.0&13.0&96\\
\VISTAFocusRow
\VISTAFirstFocus{\method{}}&12.39&72.0&56.0&15.5&31.0&\VISTALastFocus 360\\
\bottomrule
\end{tabular}
\end{table}

VISTA uses $72$ GPU-h per seed versus $40$ for Generic-CoT-SFT, and has P50 latency $15.5$ s versus $13.5$ s. Its three-seed common training totals $216$ GPU-h. That total excludes teacher generation, separate task adaptation, ablations, and extra diagnostic calls. The resources measure the cost of the evaluated systems, rather than claiming identical auxiliary capacity or total computation across methods.

\begin{figure}[htbp]
\centering
\includegraphics[width=\linewidth]{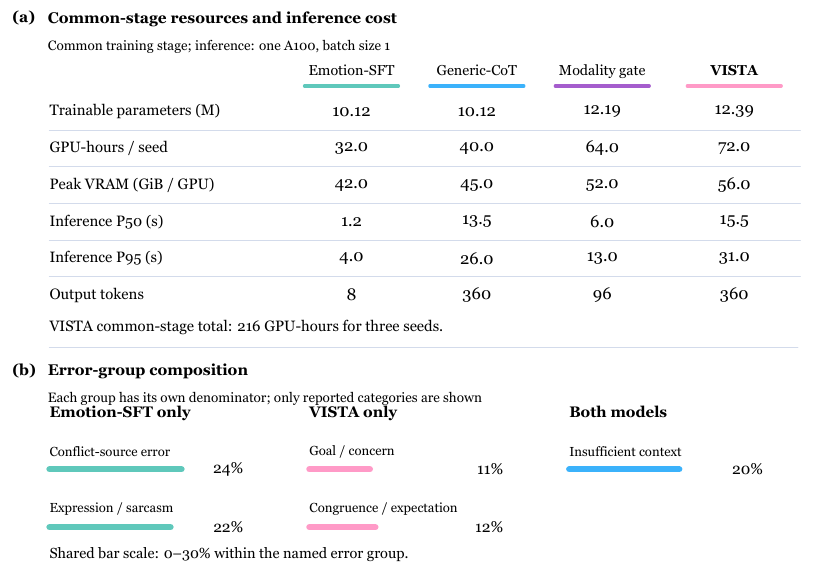}
\caption{\textbf{Resources and error groups.} (a) All six reported training and inference resource measurements for the four trained methods. (b) The five available error-category shares, separated by the three paired-model error groups that define their denominators. These shares are not a complete error partition and are not pooled across groups.}
\label{fig:cost_errors}
\end{figure}

\subsection{Errors corrected, introduced, and shared}
The paired error analysis groups examples by whether Emotion-SFT, VISTA, or both make an error. Within examples where only Emotion-SFT errs, conflict-source mistakes account for $24.0\%$ and expression-regulation/sarcasm mistakes for $22.0\%$. Within examples where only VISTA errs, goal/concern mistakes account for $11.0\%$ and congruence/Expectation mistakes for $12.0\%$. Among examples where both models err, insufficient context accounts for $20.0\%$ (Table~\ref{tab:errors}).

\begin{table}[htbp]
\centering\small
\caption{\textbf{Available paired-error audit results.} Each percentage uses the group in its first column as the denominator. The table lists the reported categories rather than a full partition; entries must not be summed across error groups or interpreted as corpus prevalence.}
\label{tab:errors}
\begin{tabularx}{\linewidth}{@{}>{\raggedright\arraybackslash}p{3.8cm}Yr@{}}
\toprule
\VISTAHeaderRow
\VISTAFirstHeader{Error group}&\VISTAHeader{Reported error category}&\VISTALastHeader{Share (\%)}\\
\midrule
Only Emotion-SFT wrong&Conflict-source identification&24.0\\
Only Emotion-SFT wrong&Expression regulation / sarcasm&22.0\\
Only VISTA wrong&Goal / concern&11.0\\
Only VISTA wrong&Goal congruence / Expectation&12.0\\
Both models wrong&Insufficient context&20.0\\
\bottomrule
\end{tabularx}
\end{table}

This grouping links the method's gain to concrete interpretation problems without obscuring errors introduced by the intermediate appraisal. It also separates an absent contextual cue from a failure to interpret an available one: better arbitration can improve the latter, while the former requires information beyond the observed input.
\FloatBarrier
\section{Ordinary conversational recognition and class recall}
\label{app:ordinary}
MELD tests whether the conflict-oriented representation remains useful for ordinary seven-class conversational emotion recognition. Trained models receive the task-specific adaptation described in Appendix~\ref{app:pilot}; Base is evaluated zero-shot. VISTA reaches $66.94\%$ weighted-F1, $52.89\%$ macro-F1, and $67.85\%$ accuracy (Table~\ref{tab:meld}). Its reported weighted-F1 gain over Emotion-SFT is $1.46$ pp, computed from the unrounded scores.

\begin{table}[htbp]
\centering\small
\caption{\textbf{Complete MELD results (\%).} Differences are in pp relative to Emotion-SFT and are calculated before rounding, so they need not equal subtraction of the displayed rounded means.}
\label{tab:meld}
\begin{tabular}{@{}lrrrr@{}}
\toprule
\VISTAHeaderRow
\VISTAFirstHeader{Method}&\VISTAHeader{Weighted-F1}&\VISTAHeader{Macro-F1}&\VISTAHeader{Accuracy}&\VISTALastHeader{$\Delta$wF1}\\
\midrule
Base&59.55&43.73&60.59&$-5.93$\\
Emotion-SFT&65.49&50.65&66.48&0.00\\
Generic-CoT-SFT&65.69&51.12&66.64&$+0.20$\\
Modality-Gate-SFT&65.74&50.84&66.73&$+0.25$\\
\VISTAFocusRow
\VISTAFirstFocus{\method{}}&66.94&52.89&67.85&\VISTALastFocus $+1.46$\\
\bottomrule
\end{tabular}
\end{table}

The final evaluation reports increased recall in all seven emotion classes relative to Emotion-SFT. Table~\ref{tab:meldrecall} gives the available class-specific measurements: disgust rises from $18.0$ to $21.0\%$, fear from $20.0$ to $23.0\%$, and sadness from $39.0$ to $41.5\%$. VISTA's neutral recall is $81.7\%$. These improvements accompany substantial remaining differences among emotion classes, which weighted-F1 alone does not display.

\begin{table}[htbp]
\centering\small
\caption{\textbf{Available MELD class-recall measurements (\%).} The table includes the available numerical recall values; a dash marks an unreported comparator value.}
\label{tab:meldrecall}
\begin{tabular}{@{}lrr@{}}
\toprule
\VISTAHeaderRow
\VISTAFirstHeader{Emotion}&\VISTAHeader{Emotion-SFT}&\VISTALastHeader{VISTA}\\
\midrule
Disgust&18.0&21.0\\
Fear&20.0&23.0\\
Sadness&39.0&41.5\\
Neutral&---&81.7\\
\bottomrule
\end{tabular}
\end{table}

The ordinary-recognition result complements the conflict gains rather than imposing a separate non-inferiority test. The final aggregate report does not supply uncertainty intervals, and the paper makes no formal significance or non-inferiority determination from the displayed differences.
\FloatBarrier
\section{Shared training and task adaptation}
\label{app:pilot}\label{app:implementation}
\subsection{What is held fixed across the core comparison}
All five core models use Qwen2.5-Omni-7B. Base is the frozen pretrained model. Emotion-SFT learns the final emotion or sentiment label; Generic-CoT-SFT adds generic reasoning; Modality-Gate-SFT learns modality gating; VISTA combines structured appraisal, conflict recognition, and semantic arbitration. The four trained methods share the common-stage conditions in Table~\ref{tab:sharedtraining}. Auxiliary heads differ, as reflected in the parameter counts in Table~\ref{tab:costfull}.

\begin{table}[htbp]
\centering\small
\caption{\textbf{Common-stage training configuration.} These settings apply to all four trained core methods. Additional task adaptation and the common THERADIA probe are specified separately.}
\label{tab:sharedtraining}\label{tab:pilot}
\begin{tabularx}{\linewidth}{@{}p{4.1cm}Y@{}}
\toprule
\VISTAHeaderRow
\VISTAFirstHeader{Setting}&\VISTALastHeader{Final configuration}\\
\midrule
Backbone&Qwen2.5-Omni-7B Thinker\\
Frozen components&Audio encoder, visual encoder, and Talker\\
Retained training examples&8,000\\
Epochs / optimization steps&3 / 375\\
Effective batch size&64\\
LoRA targets&\texttt{q\_proj}, \texttt{k\_proj}, \texttt{v\_proj}, \texttt{o\_proj} in 28 layers\\
LoRA rank / alpha / dropout&16 / 32 / 0.05\\
Optimizer&AdamW; $\beta=(0.9,0.95)$; weight decay $0.01$\\
Learning rates&LoRA: $5\times10^{-5}$; added heads: $10^{-4}$\\
Schedule&Cosine; warmup for $5\%$ of optimizer steps\\
Numerical configuration&bf16; gradient-norm clipping at $1$\\
Input budget&8,192 total tokens; 1,024 text tokens; at most 16 video frames\\
Decoding&Greedy; maximum 512 generated tokens, apart from Base's short classification interface\\
Random seeds&42, 43, 44\\
Generic-CoT / VISTA mean target length&360 / 360 tokens\\
VISTA counterfactual coefficient&$\lambda_{\mathrm{cf}}=0$\\
\bottomrule
\end{tabularx}
\end{table}

The pretrained weight revision is \texttt{ae9e1690543ffd5c0221dc27f79834d0294cba00}. The common-stage step count is $8{,}000\times3/64=375$; downstream task adaptation is a separate optimization stage. Checkpoint selection maximizes the equally weighted mean of MELD development accuracy and CH-SIMS development Acc2 after putting both on the same numerical scale; a tie selects the earlier checkpoint. The development criteria are separate from the held-out test measurements.

\subsection{Training objective and missing supervision}
The final training objective combines the emotion task, unimodal tasks, appraisal fields, conflict recognition, and arbitration:
\begin{equation}
 \mathcal L=\mathcal L_{\mathrm{emo}}+1.0\mathcal L_{\mathrm{uni}}
 +0.5\mathcal L_{\mathrm{app}}+0.5\mathcal L_{\mathrm{conf}}
 +0.5\mathcal L_{\mathrm{arb}},\qquad\lambda_{\mathrm{cf}}=0.
 \label{eq:trainingobjective}
\end{equation}
The ablation notation uses the aliases $\mathcal L_z=\mathcal L_{\mathrm{app}}$, $\mathcal L_c=\mathcal L_{\mathrm{conf}}$, and $\mathcal L_a=\mathcal L_{\mathrm{arb}}$. Classification targets use cross-entropy and continuous targets use mean squared error; a unimodal task contributes only when that modality has a valid target. Every supervised component is normalized over its valid targets. For component $j$,
\begin{equation}
 \mathcal L_j=\frac{\sum_i w_i M_{ij}\ell_{ij}}
 {\sum_i w_i M_{ij}+\epsilon},\qquad
 w_i=\begin{cases}1+0.5C_i,&\text{a valid conflict label is available},\\
 1,&\text{otherwise},\end{cases}
 \label{eq:maskedtrainingloss}
\end{equation}
where $M_{ij}$ is the supervision-validity mask and $\epsilon$ stabilizes the denominator. The conflict quantity $C_i$ is defined in Equation~\ref{eq:simsconflictintensity}. A missing target contributes neither a loss nor a denominator weight.

Within appraisal, $c,e,a,r$ use categorical losses, $k$ uses regression, and the four $n$ indicators use individually masked binary cross-entropy. The goal/concern field $g$ uses a text loss divided by its number of valid target tokens. Each field is first normalized internally; the sample-level appraisal loss then averages its valid fields. A sample with no valid appraisal fields is excluded from the appraisal denominator. Thus target length and the number of social-relevance bits do not implicitly determine field weight. Missing values remain null, agency \texttt{unknown} is excluded as a supervision target, and an $n$ bit is false only when negative evidence supports it. Confidence and validity masks are separate quantities; confidence is not automatically used as a loss weight.

\subsection{Checkpoint use across evaluation tasks}
Table~\ref{tab:adaptation} distinguishes direct evaluation, task adaptation, and probing. EmoMM evaluates the common-stage checkpoint without additional adaptation; CH-SIMS and MELD use adapted task predictors, while THERADIA compares frozen representations at equal probe capacity. These checkpoint roles describe optimization stages; source-corpus relationships are specified in Appendix~\ref{app:data}.

\begin{table}[htbp]
\centering\small
\caption{\textbf{Evaluation-stage use of the core models.} The labels identify distinct experimental protocols rather than a single checkpoint condition for every task.}
\label{tab:adaptation}
\begin{tabularx}{\linewidth}{@{}>{\raggedright\arraybackslash}p{2.1cm}YY@{}}
\toprule
\VISTAHeaderRow
\VISTAFirstHeader{Evaluation}&\VISTAHeader{Base}&\VISTALastHeader{Four trained methods}\\
\midrule
CA-MER&Frozen Base&Common-stage trained models; normalized output interface\\
EmoMM&Frozen Base&Common-stage checkpoint; no EmoMM adaptation\\
CH-SIMS v2.0&Zero-shot&Separate task adaptation; train-defined conflict groups\\
MELD&Zero-shot&Separate task adaptation for seven emotion classes\\
THERADIA probe&Frozen backbone + common probe&Frozen backbone + common probe\\
\bottomrule
\end{tabularx}
\end{table}

\subsection{The complete cross-task result overview}
Table~\ref{tab:alltaskoverview} places the six reported headline measurements together. The comparisons retain each task's metric and evaluation protocol. They are not averaged into an artificial universal score: conflict accuracy, appraisal CCC, and ordinary weighted-F1 answer different parts of the scientific question.

\begin{table}[htbp]
\centering\small\setlength{\tabcolsep}{3.8pt}
\caption{\textbf{Complete six-metric overview.} Classification and F1 values are percentages; THERADIA CCC retains its native scale. CA-MER conflict is the macro average of its two directions, CH-SIMS Q4 is binary accuracy, and THERADIA is the four-dimensional probe macro CCC.}
\label{tab:alltaskoverview}
\begin{tabular}{@{}lrrrrrr@{}}
\toprule
\VISTAHeaderRow
\VISTAFirstHeader{Method}&\VISTAHeader{CA conflict}&\VISTAHeader{CA consistent}&\VISTAHeader{EmoMM conflict}&\VISTAHeader{SIMS Q4}&\VISTAHeader{CCC}&\VISTALastHeader{MELD wF1}\\
\midrule
Base&54.5&68.0&46.5&70.0&0.470&59.55\\
Emotion-SFT&60.0&73.0&47.5&77.0&0.505&65.49\\
Generic-CoT-SFT&61.5&74.0&48.0&77.8&0.533&65.69\\
Modality-Gate-SFT&62.0&74.0&49.0&79.2&0.535&65.74\\
\VISTAFocusRow
\VISTAFirstFocus{\method{}}&64.5&74.2&52.0&81.5&0.600&\VISTALastFocus 66.94\\
\bottomrule
\end{tabular}
\end{table}
\FloatBarrier
\section{Reporting conventions and reproducibility map}
\label{app:repro}
\subsection{Quantities, denominators, and precision}
Appraisal $z$ is an event-level state containing goal/concern, goal congruence, Expectation, agency, coping/control, social norm, and expression regulation. Classification accuracy, F1, recall, and other rates are percentages; their differences are percentage points. CCC, MAE, RMSE, correlations, probability differences, and modality weights retain their native scales. Appendix~\ref{app:pilot} specifies training and adaptation.

For the recognition and readout comparisons, the reporting rule computes each metric from the unrounded predictions of seeds 42, 43, and 44 separately, averages the three seed-level metrics equally, and rounds once for display. In particular, F1 and CCC are computed within each seed before averaging. Teacher sampling, shuffling, and bootstrap randomness do not create additional independent training runs. Reported contrasts use the unrounded evaluation values: MELD's VISTA--Emotion-SFT weighted-F1 gain is $1.46$ pp, versus $1.45$ pp from the rounded means. Newly derived contrasts (Table~\ref{tab:quartilecontrasts}) use displayed values. The paired analyses retain the aggregate precision specified in Appendix~\ref{app:counterfactualreading}. Aggregate scores alone do not determine seed standard deviations, confidence intervals, significance tests, or non-inferiority decisions.

Denominators are 10,256 candidates for selection; 8,000 retained examples for source counts; 400 candidates for coverage; 150 pairs each for valid interventions and irrelevant rewrites; and the corresponding error group for each error share. The six intervention types each have 25 pairs; aggregate rates are not inverted into per-run counts. For direction compliance, invalid model outputs are failures in the fixed evaluation denominator. Continuous probability and log-odds summaries require defined probabilities and use their own valid counts. Native $\alpha$ and masking-based $\alpha_{\mathrm{diag}}$ remain distinct.

\paragraph{Training history and source identity.}
Common training, task adaptation, and development-set checkpoint selection define different uses of a sample. Source identity spans those stages and can differ from a dataset's local clip ID: two differently named examples may share a video, dialogue, session, subject, or overlapping media interval. A source-disjointness claim therefore requires a canonical source comparison as well as exact clip/media matching. No EmoMM-specific adaptation, as used here, describes the optimization protocol; it does not establish source disjointness from the shared CH-SIMS v2 training material.

\subsection{A complete map of the final experimental evidence}
Table~\ref{tab:evidencemap} locates all experiments, including the output-validity, prompt, external-method, class-recall, and resource analyses that complement the main comparisons.

\begin{table}[htbp]
\centering\small
\caption{\textbf{A guide to the complete experimental evidence.} All experiment families and their numerical results.}
\label{tab:evidencemap}
\fontsize{9}{10}\selectfont
\renewcommand{\arraystretch}{1}
\begin{tabularx}{\linewidth}{@{}>{\raggedright\arraybackslash}p{3.2cm}Y>{\raggedright\arraybackslash}p{2.8cm}@{}}
\toprule
\VISTAHeaderRow
\VISTAFirstHeader{Evidence family}&\VISTAHeader{Question and complete contents}&\VISTALastHeader{Location}\\
\midrule
Core settings and definitions&Backbone, shared training, task adaptation, units, and six-metric overview&Tables~\ref{tab:sharedtraining}--\ref{tab:alltaskoverview}; this section\\
CA-MER recognition&All subsets, overall accuracy, invalid output, external reevaluations&Tables~\ref{tab:camercomplete}, \ref{tab:externalcamer}\\
Conflict specificity&All three control contrasts and directional decomposition&Tables~\ref{tab:specificityfull}, \ref{tab:directionalcontrasts}\\
EmoMM&All four conditions and CHASE reevaluation&Table~\ref{tab:emomm}\\
CH-SIMS v2.0&All conflict groups, overall Acc2, Q4 MAE&Table~\ref{tab:chsims}\\
MELD&Weighted-F1, macro-F1, accuracy, unrounded differences, class recall&Tables~\ref{tab:meld}, \ref{tab:meldrecall}\\
THERADIA&All five four-dimensional probes, aggregate metrics, downstream conditions&Tables~\ref{tab:theradiafields}--\ref{tab:theradiaemotion}\\
Mechanism controls&All 13 appraisal/training configurations and native weights&Tables~\ref{tab:fullablations}, \ref{tab:nativeweights}\\
Counterfactual behavior&All five paired metrics, available type rates, masking diagnostic&Tables~\ref{tab:counterfactual}, \ref{tab:counterfactualfields}; Appendix~\ref{app:counterfactualreading}\\
Frozen prompting&All four prompts, length, invalid output, second-pass grounding audit&Table~\ref{tab:frozenmain}; Figure~\ref{fig:score_geometry}\\
Supervision&Candidate/source counts, all seven fields, six label-visibility metrics&Tables~\ref{tab:retention}--\ref{tab:labelvisibility}\\
Resources and errors&All six cost metrics and five reported group-specific error shares&Tables~\ref{tab:costfull}, \ref{tab:errors}\\
\bottomrule
\end{tabularx}
\end{table}

\paragraph{Public-label prevalence is an independent analysis.}
The public CH-SIMS audit uses a fixed sentiment mapping on all 2,281 released clips, finding 1,117 conflicts independently of VISTA predictions and training selection (Appendix~\ref{app:conflictdefinition}). Appendix~\ref{app:conflictdefinition} specifies the released data, fixed label mapping, exact comparison rule, and split-level counts.

\paragraph{Scope of each empirical claim.}
The comparisons support conflict gains, appraisal readout and downstream utility, and intervention sensitivity. Prompting, missingness, class recall, and latency characterize their operating conditions. These aggregates do not separately estimate the Bayes-risk terms or fixed-evidence logit contrasts developed in Appendices~\ref{app:theoryextra} and~\ref{app:pathidentification}.
\FloatBarrier
\paragraph{Responsible interpretation of event appraisal.}
\label{app:ethics}
Appraisal is an event-specific hypothesis, not a durable personal attribute or moral judgment. Goals, norms, and expression vary across people and settings; unsupported fields should remain uncertain or masked. Consequential applications require context-specific validation and human review. Conversational and audiovisual data require participant privacy and compliance with dataset access conditions.
\stopcontents[vistaevidence]
\end{document}